\documentclass[final,12pt]{elsarticle}  

\makeatletter
\def\ps@pprintTitle{%
  \let\@oddhead\@empty
  \let\@evenhead\@empty
  \def\@oddfoot{\reset@font\hfil\thepage\hfil}%
  \let\@evenfoot\@oddfoot
}
\makeatother

\usepackage[T1]{fontenc}
\usepackage[utf8]{inputenc}
\usepackage{lmodern}
\usepackage{amsmath,amssymb,amsthm}
\DeclareMathOperator*{\argmin}{arg\,min}

\usepackage{graphicx}
\usepackage{booktabs}
\usepackage{tabularx}
\usepackage{array}
\usepackage{multirow}
\usepackage{longtable}
\usepackage{enumitem}
\usepackage{xcolor}
\definecolor{muted}{RGB}{100,116,139}
\definecolor{builtcol}{RGB}{4,120,87}
\definecolor{crimsonc}{RGB}{185,28,28}
\usepackage{algorithm}
\usepackage{algpseudocode}
\usepackage[section]{placeins}
\usepackage{pgfplots}
\pgfplotsset{compat=1.17}
\usepackage[hidelinks]{hyperref}

\newtheorem{proposition}{Proposition}
\newtheorem{definition}{Definition}

\theoremstyle{remark}
\newtheorem{remark}{Remark}

\begin{document}

\begin{frontmatter}

\title{Attribution Markets: A Fisher-Market Formulation for Fractional Credit Assignment Between Planned Tasks and Performed Actions}

\author{Salavat Ishbulatov \\
Independent researcher \\
\texttt{salavat@doplan.ai}}
\date{}

\begin{abstract}
Personal and organizational planning systems maintain two records that
drift apart: what was \emph{planned} (a task's effort budget) and what was
\emph{done} (a logged action's duration and description). Existing systems
bridge them with an exclusive, all-or-nothing link that strands genuinely
related but unlinked effort and reports false stalls on active goals. We
formulate the bridge as a \emph{quasi-linear Fisher market}: planned tasks
are budget-constrained buyers, performed actions are divisible goods, and a
fused text/structural/temporal signal sets each buyer's valuation. Two
market instruments -- a seller reserve price and a buyer cash option --
yield conservation, a hard budget cap, and a provable junk filter as
theorems. We extend the market with a concave \emph{completion utility}
discounting progress as a task nears its plan; standard convergence theory
for the market's algorithm does not transfer here, resolved by a
satiation-threshold fixed point with existence (Brouwer) and local
uniqueness under an explicit diagonal-dominance condition, validated
empirically on random and adversarial instances. A de-circularized,
multi-seed benchmark -- observed affinity corrupted independently of the
scored ground truth -- surfaces a genuine weak spot: the market's sharp,
zero-entropy equilibrium is more sensitive to affinity noise than
entropy-regularized optimal transport's permanently smoothed one. We
resolve this with a one-parameter entropy-regularized generalization
unifying the two, plus a noise-adaptive rule for its regularization
strength. We report full reproducibility parameters, discuss limitations
candidly, and relate the result to multi-touch attribution, optimal
transport, and online Fisher-market algorithms.
\end{abstract}

\begin{keyword}
Fisher markets \sep resource allocation \sep credit assignment \sep multi-touch attribution \sep portfolio optimization \sep proportional response dynamics \sep entropic regularization \sep decision support systems
\end{keyword}

\end{frontmatter}

\section{Introduction}
\label{sec:intro}

\subsection{Motivation}

A planning system -- personal or organizational -- maintains two records
that are supposed to describe the same activity from two directions. The
\emph{plan side} holds tasks: a description, an effort budget, and a time
window. The \emph{result side} holds a log of what actually happened: a
description, a duration, a timestamp. Progress reporting requires a bridge
between the two: every logged unit of effort should be credited to some
task (or honestly to none), and a task's reported progress should be the
sum of the credit it received.

The bridge used in practice is almost always an \emph{exclusive link}: an
action is created under a task, in which case all of its duration counts
toward that task, or it is not, in which case none of it does. This
all-or-nothing rule fails in a specific and common way: work that is
genuinely related to a task -- preparatory reading, adjacent practice,
work in the same domain performed without explicitly opening the task --
is invisible to the bridge. The result is a system that reports zero
progress on a goal the person has, in fact, been advancing, and a review
process that opens with a false alarm.

\subsection{Contribution}

We propose to treat the bridge as a market: every performed action is a
divisible \emph{good}, every planned task is a budget-constrained
\emph{buyer}, and a fused similarity signal sets each buyer's valuation for
each good. The underlying object is not new -- it is the Fisher market of
Eisenberg and Gale \citep{eisenberg1959} -- but, to our knowledge, it has
not been applied to this attribution problem, and two instruments placed on
it (a seller reserve price and a buyer cash option) turn out to deliver
exactly the guarantees the problem needs. Concretely, we
(i)~define a formal \emph{attribution market} (Section~\ref{sec:model}) with
three properties proved as theorems -- conservation, a hard budget cap, and
a provable junk filter -- none of which the softmax or optimal-transport
baselines satisfy; (ii)~add a \emph{completion-seeking} extension
(Section~\ref{sec:completion}) whose saturating utility breaks the standard
convergence proof for the market's algorithm, show that the obvious naive
fix is vacuous, and resolve it with a
\emph{satiation-threshold} fixed point (Algorithm~\ref{alg:outerloop}) whose
existence we prove via Brouwer's theorem and whose local convergence we
prove under an explicit sufficient condition, both validated numerically;
(iii)~surface and explain a genuine \emph{weak spot}
(Section~\ref{sec:noise}) -- a de-circularized, multi-seed benchmark shows
the market is more sensitive to affinity noise than entropy-regularized
optimal transport, which we trace to the distinction between a permanently
regularized fixed point and a vanishing-regularization solution path and
resolve with a one-parameter entropic generalization and a noise-adaptive
rule for its strength; and (iv)~report a fully reproducible, de-circularized
synthetic evaluation (Section~\ref{sec:experiments}) with an explicit
reproducibility table, stated threats to validity, and a candid account of
where the guarantees are weaker than they first appear
(Section~\ref{sec:limits}).

This is, deliberately, a formulation-and-application paper rather than a
claim of new mathematics: the Fisher equilibrium, its computation by
proportional response dynamics, and the mean--variance machinery we draw on
are all classical. What is new is treating planned effort as the currency of
a market that clears logged effort -- a move that converts two informal
requirements (``do not over-credit a task,'' ``do not force unrelated work
onto the nearest task'') into theorems about a well-understood equilibrium,
together with the analysis of exactly where the formulation's natural
extensions break existing convergence theory and how those breaks are
repaired. In the tradition of applied operations-research and
decision-support work, we import an established equilibrium concept into a
new domain, analyze rigorously what transfers and what does not, and
validate the result. Two further extensions -- temporal dynamics under a
settlement horizon and a forward-looking market that steers future
planning -- are summarized in Section~\ref{sec:extensions} and developed in
full in a companion technical report \citep{doplan2026report}; the scope of
the present paper is the formulation, its guarantees, their computation, and
their honest empirical evaluation.

\subsection{Paper organization}

Section~\ref{sec:related} reviews related work. Section~\ref{sec:formulation}
gives the data model and the affinity fusion signal, including a concrete
fitting procedure. Section~\ref{sec:model} defines the base attribution
market and proves its guarantees. Section~\ref{sec:completion} develops
the completion-seeking extension and its convergence resolution, the
risk-aware valuation, and, in summary, the temporal and forward-looking
extensions. Section~\ref{sec:experiments} reports the de-circularized
empirical evaluation, including the noise-sensitivity weak spot it surfaces,
the theoretical explanation for it, and its resolution.
Section~\ref{sec:discussion} discusses when simpler alternatives suffice
(Section~\ref{sec:alternatives}), the broader thesis that planning is
describable as a market -- of which the Fisher model here is one instance,
opening a dynamic, uncertainty-aware, and learning-based program
(Section~\ref{sec:substrate}) -- and the model's limitations
(Section~\ref{sec:limits}). Section~\ref{sec:conclusion} concludes.

\section{Related work}
\label{sec:related}

The attribution problem sits at the intersection of several literatures,
none of which solves it alone; Table~\ref{tab:relmap} summarizes the
mapping we develop in this section.

\paragraph{Credit assignment and multi-touch attribution.} The general
credit-assignment problem -- given a composite outcome, which contributing
decisions deserve credit -- goes back to Minsky~\citep{minsky1961};
reinforcement learning develops its temporal form
\citep{sutton1984,pignatelli2024survey}.
The closest existing \emph{problem} to ours is multi-touch attribution in
marketing: a conversion is credited fractionally across advertising
touchpoints. Early data-driven models used bagged logistic regression
\citep{shao2011mta} and Markov-chain removal effects
\citep{anderl2016journey}; the field's recent turn is causal, reweighting
journeys to remove confounding before attributing \citep{yao2022causalmta}.
This literature supplies the fractional-credit framing but, without a budget or
a price system, offers no mechanism analogous to our budget-cap guarantee.

\paragraph{Soft assignment and optimal transport.} Mixture models fitted by
expectation--maximization \citep{dempster1977em} and attention
\citep{vaswani2017attention} are per-item fractional weighting schemes,
equivalent in our hierarchy (Section~\ref{sec:hierarchy}) to a single,
uncoupled Sinkhorn step. Optimal transport
\citep{kantorovich1942,cuturi2013sinkhorn,peyre2019computational}, and its
unbalanced generalization \citep{chizat2018unbalanced}, couples the columns
(actions) through conservation and the rows (tasks) through capacity;
low-rank and unbalanced solvers \citep{scetbon2023unbalanced} have made it
efficient at scale. This is the strongest \emph{price-free} baseline in our
comparison, and, as our experiments show (Section~\ref{sec:experiments}), it
is the more \emph{noise-robust} one -- a fact we explain theoretically in
Section~\ref{sec:noise}.

\paragraph{Market-based allocation.} The Fisher market equilibrium and its
convex-program characterization for linear utilities
\citep{eisenberg1959,nisan2007agt}, its computation by proportional response
dynamics \citep{zhang2011prd,birnbaum2011prd}, and its fairness
interpretation as proportionally fair \citep{kelly1998rate} and as the Nash
bargaining solution \citep{nash1950} are the mathematical core of our base
model. Approximate competitive equilibrium from equal incomes has been
deployed for course-seat assignment \citep{budish2011}, and pacing
equilibria are used for budget-constrained ad auctions
\citep{conitzer2022pacing}; both demonstrate that taking a real allocation
problem to a market equilibrium is an established, production-tested move,
which is the move we make here. Recent work extends Fisher-market
computation online, with regret and statistical-inference guarantees
\citep{gao2021pace,liao2022inference}; we discuss this as the natural path
to a streaming version of our model in Section~\ref{sec:extensions}.

\paragraph{Portfolio optimization.} The Eisenberg--Gale objective is,
formally, a budget-weighted sum of log-returns -- the objective of
log-optimal (Kelly) portfolio growth \citep{kelly1956,cover1991} -- and
Eisenberg and Gale's original derivation was as the pari-mutuel method for
betting markets, making "tasks as bettors staking budget on evidence" the
model's literal ancestry rather than a decorative analogy. Mean--variance
portfolio selection \citep{markowitz1952} and its robustification
\citep{black1992,ledoit2004} inform the risk-aware extension we use in
Section~\ref{sec:completion}.

\paragraph{Desktop task-context tracking and record linkage.} TaskTracer and
TaskPredictor \citep{dragunov2005,shen2006} associated low-level desktop
activity with declared tasks two decades ago, using hard classification --
exactly the all-or-nothing bridge whose failure motivates this paper.
Record linkage theory \citep{fellegi1969} supplies the log-linear
evidence-fusion template we use for the affinity signal
(Section~\ref{sec:affinity}); LLM-based entity matching
\citep{peeters2024llm} and standardized embedding benchmarks
\citep{muennighoff2023mteb} are the modern instantiation of that sensor.

\begin{table}[h]
\centering\small
\begin{tabularx}{\textwidth}{@{}lX@{}}
\toprule
\textbf{Requirement} & \textbf{Literature supplying it} \\
\midrule
Fractional, revisable credit & Credit assignment; multi-touch attribution \\
Meaning-based matching & Embeddings; task-context tracking \\
Fusing heterogeneous evidence & Record linkage (Fellegi--Sunter) \\
No double counting; capacities & Optimal transport \\
Budgets, contention, fairness & Fisher markets (Eisenberg--Gale) \\
Growth-optimality, diversification & Log-optimal / Kelly portfolio theory \\
Risk-awareness under noisy inputs & Mean--variance portfolio theory \\
\bottomrule
\end{tabularx}
\caption{Requirements of the attribution problem and the literature each is
drawn from. No single strand supplies all of them; the contribution of this
paper is the assembly and its analysis.}
\label{tab:relmap}
\end{table}

\section{Problem formulation}
\label{sec:formulation}

\subsection{Data model and notation}
\label{sec:notation}

At any point in time the system holds tasks $i=1,\dots,m$, each with a text
description, an effort budget $b_i>0$ (planned hours not yet settled), and a
planning window $[\sigma_i,\tau_i]$; and actions $j=1,\dots,n$, each with a
text description, a duration $d_j>0$ (logged hours), a timestamp $t_j$, and
an optional explicit link $\ell_j\in\{1,\dots,m\}\cup\{\varnothing\}$. A
distinguished index $i=0$, the \emph{float}, holds unattributed effort. The
object to be computed is a \emph{share matrix}
$W\in[0,1]^{(m+1)\times n}$ with $\sum_{i=0}^m w_{ij}=1$ for every $j$;
task progress is $P_i=\sum_j w_{ij}d_j$ and quality-adjusted progress is
$V_i=\sum_j q_{ij}w_{ij}d_j$, where $q_{ij}$ is the affinity defined next.

\subsection{The affinity signal}
\label{sec:affinity}

Following the record-linkage template \citep{fellegi1969}, we fuse three
independent evidence sources in log-odds space and pass the result through
a logistic link:
\begin{equation}
q_{ij} \;=\; \sigma\Big(
   \lambda_{\mathrm{sem}}\cos\!\big(E(x^{\mathrm T}_i),E(x^{\mathrm A}_j)\big)
 + \lambda_{\mathrm{link}}\,\mathbb{1}[\ell_j=i]
 + \lambda_{\mathrm{time}}\,\kappa(t_j;\sigma_i,\tau_i)
\Big)^{\gamma},
\label{eq:affinity}
\end{equation}
where $E$ is a text embedding model, $\kappa$ is a window kernel (full
weight inside the task's window, discounted outside), $\sigma(\cdot)$ is
the logistic function, and $\gamma>1$ sharpens moderate similarities toward
the extremes. Unlike the exploratory treatment in our earlier technical
report \citep{doplan2026report}, we here give a concrete, reproducible
fitting procedure for $\lambda=(\lambda_{\mathrm{sem}},\lambda_{\mathrm{link}},
\lambda_{\mathrm{time}})$.

\paragraph{Fitting procedure.} Given a labelled correction set
$\mathcal{D}=\{(i,j,y_{ij})\}$, where $y_{ij}\in\{0,1\}$ records a
user-confirmed match ($1$) or non-match ($0$) between task $i$ and action
$j$, $\lambda$ is fit by $\ell_2$-regularized logistic regression: writing
$s_{ij}(\lambda)$ for the pre-logistic score in \eqref{eq:affinity},
\begin{equation}
\hat\lambda \;=\; \argmin_{\lambda}\; -\!\!\sum_{(i,j,y)\in\mathcal D}
   \Big[y\log\sigma(s_{ij})+(1-y)\log\big(1-\sigma(s_{ij})\big)\Big]
   \;+\; \frac{\eta}{2}\lVert\lambda\rVert_2^2,
\label{eq:fitlambda}
\end{equation}
a convex problem with closed-form gradient
$\nabla_\lambda = -\sum(y-\sigma(s_{ij}))\,\nabla_\lambda s_{ij} + \eta\lambda$,
solvable by any off-the-shelf convex solver (Newton or L-BFGS converge in a
handful of iterations at this dimensionality). We expect a fitted
$\hat\lambda$ to reflect the intuition that an explicit link should
dominate a moderate semantic similarity; the exact values are
deployment-specific and are not load-bearing for any result in this paper.

\subsection{Design desiderata}
\label{sec:desiderata}

The model is required to satisfy four properties, which
Section~\ref{sec:model} establishes as theorems rather than validated
heuristics:

\begin{description}
  \item[D1 (conservation).] No logged hour is credited more than once.
  \item[D2 (budget respect).] No task can be credited more progress than
    it planned.
  \item[D3 (honest abstention).] Evidence below a quality floor is left
    unattributed rather than forced onto the nearest task.
  \item[D4 (explainability).] Every share is attributable to an auditable
    quantity, not a black-box score.
\end{description}

Figure~\ref{fig:pipeline} shows how the model delivers these four
properties end to end, and locates each desideratum at the stage that
enforces it.

\begin{figure}[h]
\centering
\includegraphics[width=0.98\textwidth]{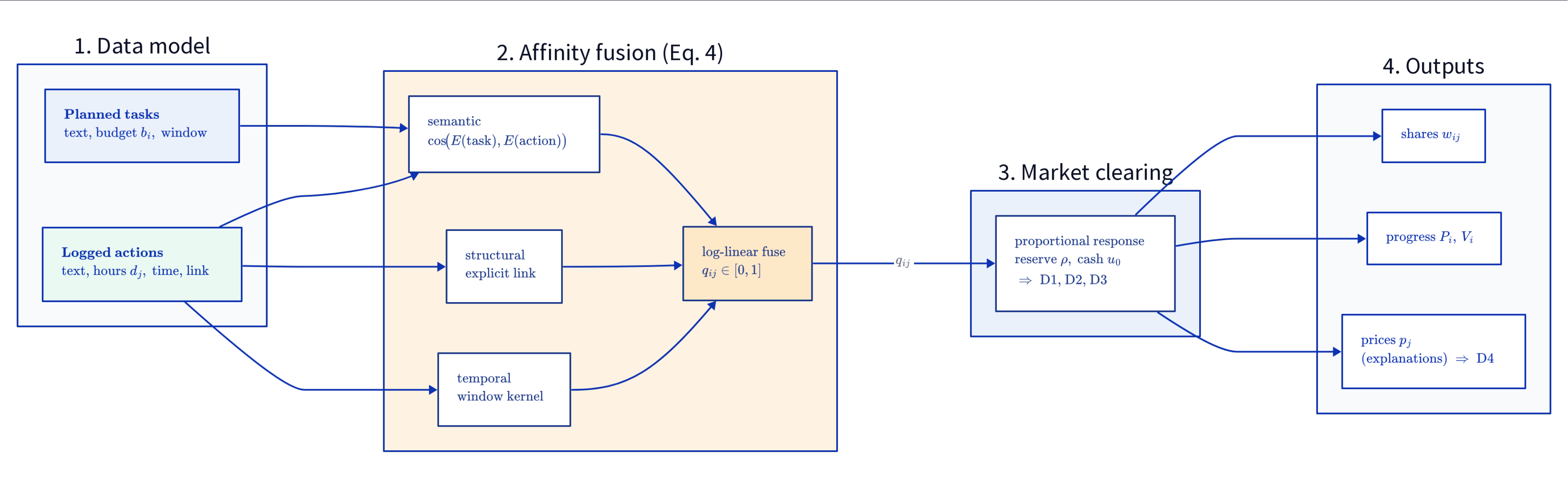}
\caption{The attribution pipeline, read left to right in four stages.
\textbf{(1)~Data model:} planned tasks (text, effort budget $b_i$, time
window) and logged actions (text, duration $d_j$, timestamp, optional
link). \textbf{(2)~Affinity fusion}~\eqref{eq:affinity} combines three
evidence sources -- semantic embedding similarity, the explicit structural
link, and a temporal window kernel -- into a single fused affinity
$q_{ij}\in[0,1]$. \textbf{(3)~Market clearing:} $q_{ij}$ parameterizes the
attribution market of Section~\ref{sec:model}, whose proportional-response
clearing (with reserve rate $\rho$ and cash rate $u_0$) is what enforces
conservation, the budget cap, and honest abstention -- desiderata
\textbf{D1}--\textbf{D3}. \textbf{(4)~Outputs:} fractional shares $w_{ij}$,
per-task progress $P_i$ and $V_i$, and prices $p_j$, which give an
auditable explanation of every share (\textbf{D4}).}
\label{fig:pipeline}
\end{figure}

\section{The base attribution market}
\label{sec:model}

\subsection{Market definition}
\label{sec:marketdef}

Figure~\ref{fig:analogy} gives the market's dictionary: tasks are investors
with a budget of planned hours, actions are companies whose equity is their
logged hours, and a share is the fraction of an action's hours credited to
a task.

\begin{figure}[h]
\centering
\includegraphics[width=0.85\textwidth]{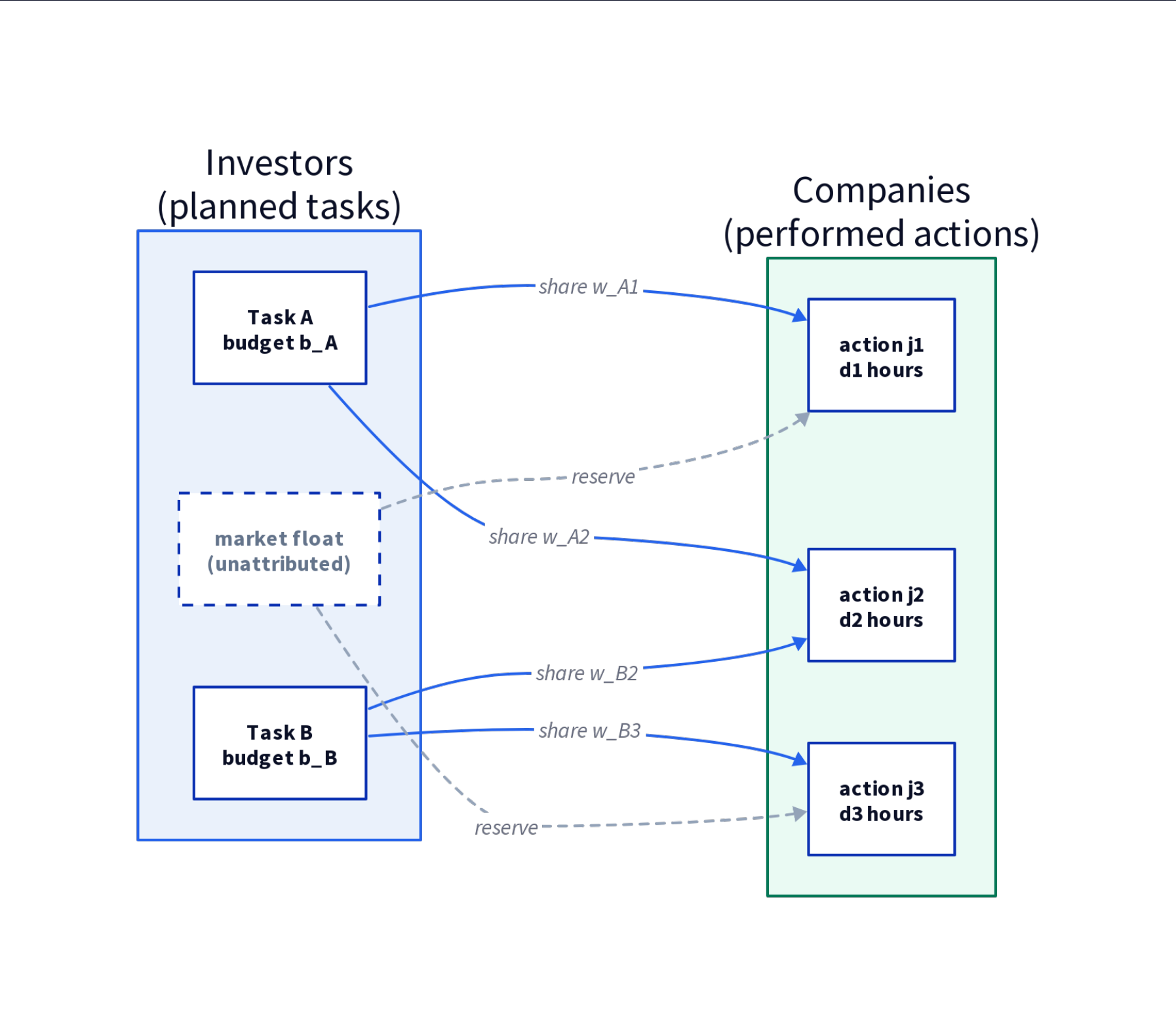}
\caption{The attribution market. Two tasks (investors) hold shares of three
actions (companies); the float holds the reserve on every action, which
becomes the action's full owner if no task's valuation clears it.}
\label{fig:analogy}
\end{figure}

\begin{definition}[Attribution market]
\label{def:market}
Given affinities $q$, durations $d$, budgets $b$, a reserve rate $\rho>0$,
and a cash utility $u_0>0$, the \emph{attribution market} is the quasi-linear
Fisher market in which task $i$ is a buyer with budget $b_i$ and linear
valuation $u_{ij}=q_{ij}d_j$ for the whole of action $j$ (so a share $w$ of
action $j$ is worth $q_{ij}d_j w$ to task $i$); task $i$ may also hold cash,
valued at $u_0$ per unit; and action $j$ carries a standing reserve bid of
$\rho d_j$ from the float.
\end{definition}

An equilibrium is a price vector $p\in\mathbb{R}_{>0}^n$ and spending
$f_{ij}\ge0$, cash $c_i\ge0$ with $\sum_j f_{ij}+c_i=b_i$, such that shares
$w_{ij}=f_{ij}/p_j$, $w_{0j}=\rho d_j/p_j$ clear every action
($p_j=\rho d_j+\sum_i f_{ij}$) and every task's money earns the maximal
available rate of return:
\begin{equation}
f_{ij}>0 \;\Rightarrow\; \frac{q_{ij}d_j}{p_j}=\beta_i,
\qquad
c_i>0 \;\Rightarrow\; \beta_i=u_0,
\qquad
\beta_i=\max\Big\{u_0,\;\max_j\frac{q_{ij}d_j}{p_j}\Big\}.
\label{eq:kkt}
\end{equation}
Without the reserve and cash terms, Definition~\ref{def:market} is exactly
the Eisenberg--Gale equilibrium \citep{eisenberg1959}; existence follows
from the standard convex-programming formulation
\citep{eisenberg1959,nisan2007agt}, and the reserve (a fixed exogenous bid)
and cash (a standard quasi-linear extension) preserve that structure.

\subsection{Guarantees}
\label{sec:guarantees}

\begin{proposition}[Conservation]
\label{prop:conservation}
For every action $j$, $\sum_{i=0}^m w_{ij}=1$; hence
$\sum_{i\ge1}P_i\le\sum_j d_j$.
\end{proposition}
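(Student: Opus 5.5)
The plan is to read conservation directly off the market-clearing condition in the equilibrium definition, with no appeal to optimality or to the KKT conditions \eqref{eq:kkt}.

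\textbf{Step 1 (well-posedness).} First check that the shares are well defined. An equilibrium price vector lies in $\mathbb{R}_{>0}^n$. Moreover, the clearing identity gives
\[
p_j=\rho d_j+\sum_{i\ge1} f_{ij}\ \ge\ \rho d_j>0,
\]
since $\rho>0$, $d_j>0$ and $f_{ij}\ge0$. So the divisions defining $w_{ij}=f_{ij}/p_j$ and $w_{0j}=\rho d_j/p_j$ are legitimate.

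\textbf{Step 2 (sum of shares).} Fix an action $j$ and sum its shares, the float included:
\[
\sum_{i=0}^m w_{ij}=\frac{\rho d_j+\sum_{i\ge1}f_{ij}}{p_j}=\frac{p_j}{p_j}=1,
\]
where the middle equality is exactly the clearing condition $p_j=\rho d_j+\sum_i f_{ij}$. Nonnegativity of $f_{ij}$ and of $\rho d_j$ then puts every share in $[0,1]$, so $W$ is a valid share matrix in the sense of Section~\ref{sec:notation}.

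\textbf{Step 3 (progress bound).} Exchange the order of summation in total task progress:
\[
\sum_{i\ge1}P_i=\sum_j d_j\sum_{i\ge1}w_{ij}=\sum_j d_j\,(1-w_{0j})\le\sum_j d_j .
\]
The inequality uses $w_{0j}\ge0$ and $d_j>0$. Equality holds precisely when the float receives nothing, and since $w_{0j}>0$ always, the inequality is in fact strict. That strictness is worth a remark but is not required by the statement.

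\textbf{Main obstacle.} The only real issue is whether an equilibrium with these properties exists, so that the proposition is not vacuous. I take existence as granted by the paragraph following Definition~\ref{def:market}. The identity itself holds at \emph{any} price and spending pair satisfying the clearing equations. I would therefore state that the proof uses only clearing, so it also covers the approximate iterates of proportional response, provided those iterates set prices by the same clearing formula.
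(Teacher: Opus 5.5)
Your proof is correct and takes the same route as the paper: its one-line proof likewise reads $\sum_{i=0}^m w_{ij}=1$ directly off $w_{ij}=f_{ij}/p_j$, $w_{0j}=\rho d_j/p_j$ and the clearing identity $p_j=\rho d_j+\sum_i f_{ij}$, and the progress bound then follows by exchanging the order of summation as you do. Your additional checks are valid refinements that the paper leaves implicit: $p_j\ge\rho d_j>0$ makes the shares well defined, and the bound is strict because $w_{0j}>0$ whenever $\rho>0$.
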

\begin{proof}
Immediate from $w_{ij}=f_{ij}/p_j$, $w_{0j}=\rho d_j/p_j$, and market
clearing $p_j=\rho d_j+\sum_i f_{ij}$.
\end{proof}

\begin{proposition}[Budget cap]
\label{prop:budget}
Every equilibrium satisfies $P_i\le b_i/\rho$.
\end{proposition}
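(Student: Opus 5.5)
The bound follows from the market-clearing identity alone: each task pays for its shares at prices that are at least the float's reserve bid, so one planned hour of budget can buy at most $1/\rho$ logged hours of credit. No optimality condition from \eqref{eq:kkt} is needed beyond the equilibrium's accounting identities.

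\textbf{Step 1: prices dominate the reserve.} From market clearing $p_j=\rho d_j+\sum_{k\ge1} f_{kj}$ and $f_{kj}\ge 0$ we get $p_j\ge \rho d_j$ for every action $j$. Since $\rho>0$ and $d_j>0$, every price is strictly positive, so the shares $w_{ij}=f_{ij}/p_j$ are well defined.

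\textbf{Step 2: bound the credit bought from each action.} Substituting $w_{ij}=f_{ij}/p_j$ gives
\[
w_{ij}d_j=\frac{f_{ij}\,d_j}{p_j}\le \frac{f_{ij}\,d_j}{\rho d_j}=\frac{f_{ij}}{\rho}.
\]
In words, the credit task $i$ receives from action $j$ is at most its spending on $j$ divided by $\rho$.

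\textbf{Step 3: sum over actions and use the budget constraint.} Summing Step 2 over $j$ gives
\[
P_i=\sum_j w_{ij}d_j\le \frac{1}{\rho}\sum_j f_{ij}=\frac{b_i-c_i}{\rho}\le \frac{b_i}{\rho},
\]
where the equality is the budget identity $\sum_j f_{ij}+c_i=b_i$ and the last inequality uses $c_i\ge 0$.

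The only point needing care is Step 1. The lower bound $p_j\ge\rho d_j$ must come from the clearing identity itself, not from any KKT or rate-of-return condition. This is what makes the cap hold for every equilibrium, whether or not it is unique. The cash option plays no role beyond $c_i\ge0$; it only makes the bound slack by $c_i/\rho$.
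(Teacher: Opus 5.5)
Your proof is correct and matches the paper's argument: the clearing identity gives $p_j\ge\rho d_j$, so $P_i=\sum_j f_{ij}d_j/p_j\le\frac{1}{\rho}\sum_j f_{ij}\le b_i/\rho$. Your explicit appeal to $\sum_j f_{ij}+c_i=b_i$ with $c_i\ge 0$ in the last step just spells out what the paper leaves implicit.
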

\begin{proof}
Since $p_j\ge\rho d_j$ for every $j$,
$P_i=\sum_j(f_{ij}/p_j)d_j\le\frac1\rho\sum_jf_{ij}\le b_i/\rho$.
\end{proof}

\begin{proposition}[Junk filter]
\label{prop:junk}
If $q_{ij}<u_0\rho$ then $w_{ij}=0$.
\end{proposition}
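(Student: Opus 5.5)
The plan is a direct contrapositive argument from the equilibrium conditions \eqref{eq:kkt}, combined with the price floor already used in the proof of Proposition~\ref{prop:budget}. Suppose, for contradiction, that some task $i\ge1$ has $w_{ij}>0$ at an equilibrium. Since $w_{ij}=f_{ij}/p_j$ with $p_j>0$, this means $f_{ij}>0$. The first condition in \eqref{eq:kkt} then forces $q_{ij}d_j/p_j=\beta_i$.

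Next I bound both sides of that identity. On the left, market clearing gives $p_j=\rho d_j+\sum_k f_{kj}\ge\rho d_j$. This is exactly the inequality used for Proposition~\ref{prop:budget}, and it yields
\[
\frac{q_{ij}d_j}{p_j}\le\frac{q_{ij}}{\rho}.
\]
On the right, the definition of $\beta_i$ as a maximum that includes $u_0$ gives $\beta_i\ge u_0$. Chaining the two,
\[
u_0\le\beta_i=\frac{q_{ij}d_j}{p_j}\le\frac{q_{ij}}{\rho},
\]
so $q_{ij}\ge u_0\rho$. This contradicts the hypothesis $q_{ij}<u_0\rho$. Hence $f_{ij}=0$, and therefore $w_{ij}=0$.

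The argument has no real obstacle. The one point to handle carefully is that the conclusion holds at \emph{every} equilibrium, with no uniqueness assumption. That is fine, because only the pointwise equilibrium conditions are used. It is also worth checking that the strict inequality in the hypothesis is what is needed: at $q_{ij}=u_0\rho$ the bound becomes $u_0\le\beta_i\le u_0$. That case is not excluded, and it can occur only when no other buyer spends on $j$ and $p_j=\rho d_j$, which is why the statement is strict.

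Finally, I would remark on what the result means. The float's reserve $\rho d_j$ guarantees that action $j$ never costs less than $\rho$ per hour. The cash option guarantees that task $i$ never accepts a return below $u_0$. So any action whose affinity with a task falls below the product of these two rates is priced out of that task's demand, and its hours remain with the float. This is desideratum~\textbf{D3}.
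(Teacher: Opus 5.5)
Your argument is correct and is the paper's proof, phrased as a contradiction: from $f_{ij}>0$, \eqref{eq:kkt} gives $q_{ij}d_j/p_j=\beta_i\ge u_0$, and the reserve floor $p_j\ge\rho d_j$ turns this into $q_{ij}\ge u_0\rho$. Your closing remark on the boundary case is wrong, however: task $i$'s own bid enters the clearing price, so $f_{ij}>0$ forces $p_j\ge\rho d_j+f_{ij}>\rho d_j$, the equality case you describe can never arise, and the filter therefore holds for all $q_{ij}\le u_0\rho$ (the strict hypothesis is not needed).
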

\begin{proof}
If $f_{ij}>0$ then by \eqref{eq:kkt} $q_{ij}d_j/p_j=\beta_i\ge u_0$, so
$p_j\le q_{ij}d_j/u_0$; but $p_j\ge\rho d_j$, giving $q_{ij}\ge u_0\rho$.
\end{proof}

\begin{proposition}[Proportional fairness]
\label{prop:fair}
In the $\rho,u_0\to0$ limit, the equilibrium allocation maximizes
$\sum_ib_i\log V_i$ -- the Nash bargaining solution among tasks with
bargaining power proportional to $b_i$ \citep{nash1950,kelly1998rate}.
\end{proposition}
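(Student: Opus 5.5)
In the limit $\rho,u_0\to0$ the market of Definition~\ref{def:market} becomes the plain linear Fisher market with buyers $i=1,\dots,m$, budgets $b_i$ and valuations $u_{ij}=q_{ij}d_j$. The plan is to prove directly that this limiting equilibrium solves the Eisenberg--Gale program and then pass to the limit. The key identity is that a buyer's linear utility coincides with quality-adjusted progress:
\[
\sum_j u_{ij}w_{ij}=\sum_j q_{ij}d_jw_{ij}=V_i.
\]
Hence the Eisenberg--Gale program
\[
\max_{W\ge0}\ \sum_{i\ge1} b_i\log V_i
\qquad\text{s.t.}\qquad \sum_{i\ge1}w_{ij}\le 1\quad\forall j
\]
is exactly the objective in the statement.

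First, at $\rho=u_0=0$, I verify that the conditions \eqref{eq:kkt} are the KKT conditions of this concave program, with $p_j$ serving as the multiplier on the supply constraint of good $j$. Stationarity reads $b_iq_{ij}d_j/V_i\le p_j$, with equality when $w_{ij}>0$. Setting $\beta_i=b_i/V_i$ turns this into $q_{ij}d_j/p_j\le\beta_i$, with equality on the support, which is \eqref{eq:kkt} with the cash branch removed. To match $\beta_i$ with the market's rate of return, I use budget exhaustion: from $f_{ij}=p_jw_{ij}$ and $q_{ij}d_j=\beta_ip_j$ on the support,
\[
V_i=\sum_j q_{ij}d_jw_{ij}=\beta_i\sum_j f_{ij}=\beta_i b_i,
\]
since $c_i=0$ once $u_0=0$. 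This gives exactly $\beta_i=b_i/V_i$. Because the program is concave with linear constraints, KKT is sufficient, so every $\rho=u_0=0$ equilibrium maximizes $\sum_ib_i\log V_i$. The Nash-bargaining and proportional-fairness reading then follows by citing \citep{nash1950,kelly1998rate}: the weighted log-sum with zero disagreement point is the asymmetric Nash product.

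Second, I handle the limit itself, which I expect to be the main obstacle. I would argue it at the level of the convex program. For $\rho,u_0>0$ the market is the quasi-linear Eisenberg--Gale program with objective
\[
\sum_i \bigl(b_i\log(V_i+u_0c_i)-u_0c_i\bigr),
\]
or equivalently with the float acting as an extra buyer with budget $\rho\sum_jd_j$ and linear utility. This is a continuous perturbation of the limiting program over a compact feasible set. Since the optimal utilities $V_i$ are unique in Eisenberg--Gale, Berge's maximum theorem gives that limit points of the $(\rho,u_0)$-equilibrium allocations maximize $\sum_ib_i\log V_i$.

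Two technical points need care. The float's share $w_{0j}$ must tend to $0$; this holds because $p_j$ stays bounded below once the buyers have positive budgets on goods they value. The objective $\log V_i$ must stay finite; I would assume each task has some $q_{ij}>0$, which holds since $\sigma(\cdot)>0$. If making the limit rigorous proves delicate, I would fall back to interpreting the proposition as a statement about the $\rho=u_0=0$ market itself, which the first step proves completely.
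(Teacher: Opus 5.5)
Your Step~1 is the paper's own justification made explicit. The paper gives no separate proof of Proposition~\ref{prop:fair}; it relies on the remark that without the reserve and cash terms Definition~\ref{def:market} is exactly the Eisenberg--Gale equilibrium, and your KKT verification is the standard proof of that identification. You do need to fix one inverted ratio. Stationarity $b_iq_{ij}d_j/V_i\le p_j$ rearranges to $q_{ij}d_j/p_j\le V_i/b_i$, so the correct identification is $\beta_i=V_i/b_i$, not $b_i/V_i$. That is also what your budget-exhaustion identity $V_i=\beta_ib_i$ gives. With that correction, Step~1 completely proves the statement for the $\rho=u_0=0$ market, which is the level at which the paper asserts it.

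Step~2, which goes beyond the paper, has a genuine gap as written, because your program does not characterize the $(\rho,u_0)$ equilibria. With the term $-u_0c_i$, the KKT condition for $c_i$ reads $(V_i+u_0c_i)/b_i\ge1$, which is a cash rate of $1$ rather than $u_0$; the term must be $-c_i$. The float is also not an extra buyer with budget $\rho\sum_jd_j$ and linear utility. Such a buyer would spend only on its best bang-per-buck goods rather than bidding exactly $\rho d_j$ on every $j$. You need $n$ single-minded float buyers, which gives
\[
\max_{w,c\ge0}\ \sum_{i\ge1}\bigl[b_i\log(V_i+u_0c_i)-c_i\bigr]+\sum_j\rho d_j\log w_{0j}
\quad\text{s.t.}\quad \sum_{i\ge0}w_{ij}\le1\ \ \forall j .
\]
Even then Berge's theorem does not apply off the shelf. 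For every $\rho>0$ the term $\rho d_j\log w_{0j}$ equals $-\infty$ at $w_{0j}=0$, so the objective is not jointly continuous at $\rho=0$.

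The cleaner route is to pass to the limit in \eqref{eq:kkt} directly:
\begin{enumerate}
\item Equilibria are bounded ($f_{ij},c_i\le b_i$ and $p_j\le\rho d_j+\sum_ib_i$), so you can extract a convergent subsequence.
\item Prices are bounded above and every $q_{ij}>0$ by \eqref{eq:affinity}, so each $\beta_i$ is bounded below by a positive constant. Hence $c_i=0$ once $u_0$ is small.
\item Suppose the set $S$ of goods with limit price $0$ were nonempty. Then every $\beta_i$ would diverge, while goods outside $S$ offer bounded bang-per-buck. So eventually every task spends its whole budget inside $S$, giving $\sum_{j\in S}p_j\ge\sum_ib_i$, which contradicts $p_j\to0$ on $S$.
\item Hence limit prices are positive and $w_{0j}=\rho d_j/p_j\to0$. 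The conditions \eqref{eq:kkt} survive the limit: a bid that is positive in the limit is positive along the tail of the sequence, so its bang-per-buck equality passes to the limit.
\end{enumerate}
Every limit point is therefore a $\rho=u_0=0$ equilibrium, and your Step~1 finishes the proof. This is the argument your assertion that $p_j$ stays bounded below actually requires.
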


Proposition~\ref{prop:budget} and \ref{prop:junk} give D2 and D3 as
theorems; Proposition~\ref{prop:conservation} gives D1; prices $p_j$ give D4
(a high price marks contested, scarce evidence). None of the four hold for
softmax attention attribution, and only conservation and a soft version of
the budget cap hold for entropic optimal transport (Section~\ref{sec:hierarchy}).

\subsection{Computation}
\label{sec:computation}

Equilibrium is computed by \emph{proportional response dynamics} (PRD)
\citep{zhang2011prd,birnbaum2011prd}: each task repeatedly re-splits its
budget over goods and cash in proportion to the utility each earned in the
previous round (Algorithm~\ref{alg:prd}). PRD is stateless per iteration,
distributed across tasks, and provably converges to the equilibrium of
Definition~\ref{def:market} for the linear utilities used here
\citep{zhang2011prd,birnbaum2011prd}; each iteration is one
$O(mn)$ elementwise pass.

\begin{algorithm}[h]
\caption{Proportional response dynamics for the base attribution market}
\begin{algorithmic}[1]
\Require affinities $q\in[0,1]^{m\times n}$, durations $d\in\mathbb R_{>0}^n$,
budgets $b\in\mathbb R_{>0}^m$, reserve $\rho\ge0$, cash rate $u_0>0$,
tolerance $\mathrm{tol}$, iteration cap $T_{\max}$, safety constant $\varepsilon=10^{-9}$
\State $v_{ij}\gets q_{ij}d_j$ \Comment{value matrix, computed once}
\State $f_{ij} \gets 0.7\,b_i\,(v_{ij}+\varepsilon)\big/\textstyle\sum_{j'}(v_{ij'}+\varepsilon)$;
       \quad $c_i\gets 0.3\,b_i$ \Comment{$\varepsilon$ guards a task with $q_{i\cdot}\equiv0$}
\For{$t=1,\dots,T_{\max}$}
  \State $p_j \gets \rho d_j+\sum_i f_{ij}$ \Comment{one column sum, $O(mn)$}
  \State $w_{ij}\gets f_{ij}/p_j$
  \State $g_{ij} \gets v_{ij}\, w_{ij}$;\quad $g_{i0}\gets u_0 c_i$ \Comment{realized utility of last bid}
  \State $\Sigma_i \gets \textstyle\sum_{j'}g_{ij'}+g_{i0}+\varepsilon'$ \Comment{$\varepsilon'=10^{-12}$ guards an all-zero row}
  \State $f_{ij}^{\mathrm{new}} \gets b_i\, g_{ij}/\Sigma_i$;\quad
         $c_i^{\mathrm{new}} \gets b_i\, g_{i0}/\Sigma_i$
  \If{$\max_{ij}|f_{ij}^{\mathrm{new}}-f_{ij}|<\mathrm{tol}$}
    \State $f\gets f^{\mathrm{new}}$, $c\gets c^{\mathrm{new}}$; \textbf{break}
  \EndIf
  \State $f\gets f^{\mathrm{new}}$, $c\gets c^{\mathrm{new}}$
\EndFor
\State \Return $w$, $p$
\end{algorithmic}
\label{alg:prd}
\end{algorithm}

\paragraph{Vectorized form.} Every step in Algorithm~\ref{alg:prd} is a
whole-array operation, not a loop over $(i,j)$ pairs. With
$v=q\odot d$ (elementwise product, $d$ broadcast across the $m$
task-rows) computed once outside the loop, one round costs a column sum
($p$), a broadcasted division ($w$), an elementwise product ($g$), a row
sum ($\Sigma$), and a broadcasted division ($f^{\mathrm{new}}$) --
five $O(mn)$ array operations, with no branching, no sorting, and no
linear system to solve. This is four to six lines in any array language
(NumPy, PyTorch, plain typed arrays); no library beyond elementwise
arithmetic and axis-wise sums is required.

\paragraph{Initialization and numerical safety.} The safety constant
$\varepsilon=10^{-9}$ added to $v_{ij}$ before the first normalization
exists for one reason: a task with $q_{i\cdot}\equiv0$ (zero affinity to
every logged action, e.g.\ a brand-new task) would otherwise divide
$0/0$ on line~2. With $\varepsilon>0$ such a task starts with its
$0.7\,b_i$ spread \emph{uniformly} across all actions rather than
crashing, and PRD's own dynamics push that spend toward cash within the
first few rounds once it earns near-zero utility everywhere
(Proposition~\ref{prop:junk}). The second constant
$\varepsilon'=10^{-12}$ inside $\Sigma_i$ guards the symmetric case at
every later round; with $u_0>0$ as required, $\Sigma_i$ never actually
reaches zero, so the guard costs nothing but makes the implementation
total (no exception path) rather than partial. Both constants are four
to seven orders of magnitude below the coarsest quantity that matters
(a logged minute is $\approx1.7\times10^{-2}$\,h), so neither perturbs
the reported shares.

\paragraph{Stopping rule in practice.} Table~\ref{tab:repro} lists
$\mathrm{tol}=10^{-9}$ (absolute, in hours, on $\max_{ij}|\Delta f_{ij}|$)
and $T_{\max}=400$. On the main benchmark
(Section~\ref{sec:experiments}: $m=7$ tasks, $\approx170$ actions, $45$
seed/noise combinations) \emph{none} of the $45$ runs reach $10^{-9}$
within $400$ rounds -- at this scale the loop always exits through the
iteration cap, not the tolerance. This does not mean the allocation is
unstable: across the same $45$ runs, $\max_{ij}|\Delta f_{ij}|$ falls
below $10^{-2}$\,h after a mean of $62$ rounds (median $59$, worst case
$123$) and below $10^{-3}$\,h after a mean of $246$ rounds (median
$240$) -- both far below the duration of a single logged action, so the
returned shares are already stable well beyond the resolution that
matters long before the cap is hit. An implementation that wants a
tolerance the loop can actually satisfy in practice should use something
in the $10^{-3}$--$10^{-2}$\,h range and treat $T_{\max}$, not
$\mathrm{tol}$, as the real governor of worst-case runtime.

\paragraph{Complexity and cost.} Each round is $O(mn)$ time and the live
state ($f$, $w$, $g$) is $O(mn)$ memory. At DoPlan's scale (tens of
tasks, a few hundred logged actions per attribution epoch) a full
$400$-round run is on the order of $10^5$--$10^6$ elementary array
operations -- comfortably under a second in an interpreted array
language, so the whole computation can run synchronously inside a
request rather than as a background job.

\paragraph{Worked example.} A fully worked two-task, three-action trace of
Algorithm~\ref{alg:prd} -- exact to the precision shown, and usable to
unit-test an independent implementation without the full benchmark -- is
given in \ref{app:worked}.

\subsection{Relation to alternative attribution rules}
\label{sec:hierarchy}

Four rules of increasing coupling can be applied to the same affinity
matrix (Table~\ref{tab:hierarchy}): \emph{hard assignment} (winner-take-all
above a threshold), \emph{softmax} (per-action fractional weighting,
uncoupled across actions), \emph{entropic optimal transport} (softmax
columns coupled by conservation and soft row capacities, computed by
Sinkhorn scaling \citep{cuturi2013sinkhorn}), and the \emph{market} (the
same coupling with an endogenous price system). Each step adds a coupling
at the cost of algorithmic complexity; Section~\ref{sec:experiments}
quantifies what each is worth.

\begin{table}[h]
\centering\small
\begin{tabularx}{\textwidth}{@{}lXXX@{}}
\toprule
\textbf{Rule} & \textbf{Conservation} & \textbf{Capacity} & \textbf{Guarantee type} \\
\midrule
Hard assignment & no & no & none \\
Softmax & no & no & none \\
Entropic OT (Sinkhorn) & yes & soft & none (empirical) \\
Attribution market & yes & hard & Propositions~\ref{prop:conservation}--\ref{prop:junk} \\
\bottomrule
\end{tabularx}
\caption{The hierarchy of attribution rules. Only the market has
theorem-level guarantees; entropic OT is included as the strongest
price-free, noise-robust baseline (Section~\ref{sec:noise}).}
\label{tab:hierarchy}
\end{table}

\section{The completion-seeking extension}
\label{sec:completion}

\subsection{Motivation and the completion utility}

The base market gives every task a \emph{linear} valuation: an hour of
well-matched progress is worth the same whether the task has just started
or is nearly done. A task approaching its plan should value further
progress \emph{less} -- the diminishing returns of completion. We replace
the linear valuation with a concave, increasing \emph{completion utility}
of quality-adjusted progress,
\begin{equation}
U_i(V) \;=\; T_i\big(1-e^{-V/T_i}\big),
\qquad U_i'(V) = e^{-V/T_i},
\label{eq:completionutil}
\end{equation}
indexed by the task's planned total $T_i$; $U_i'(0)=1$ and decays toward
completion. Task $i$ now solves, given prices $p$,
\begin{equation}
\max_{f_i\ge0,\,c_i\ge0}\;
  U_i\!\Big(\textstyle\sum_j q_{ij}\tfrac{f_{ij}}{p_j}d_j\Big) + u_0 c_i
\quad\text{s.t.}\quad \textstyle\sum_jf_{ij}+c_i=b_i.
\label{eq:completionprog}
\end{equation}

\begin{proposition}[Junk filter sharpens]
\label{prop:completionjunk}
In any completion-market equilibrium, task $i$ holds a positive share of
action $j$ only if $q_{ij}\ge u_0\rho/U_i'(V_i)$; because $U_i'$ is
non-increasing, this threshold rises as $V_i$ grows.
\end{proposition}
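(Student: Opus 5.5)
The plan is to mirror the proof of Proposition~\ref{prop:junk}, replacing the constant marginal value of linear utility by the local marginal value $U_i'(V_i)$ at the equilibrium point. First I will write the first-order (KKT) conditions of task $i$'s problem \eqref{eq:completionprog} at fixed prices $p$. The objective is concave in $(f_i,c_i)$: $V_i$ is linear in $f_i$ and $U_i$ is concave, so the KKT conditions are both necessary and sufficient. With multiplier $\mu_i$ for the budget equality, the partial derivative in $f_{ij}$ is $U_i'(V_i)\,q_{ij}d_j/p_j$ and the partial derivative in $c_i$ is $u_0$. The conditions are therefore
\[
U_i'(V_i)\,\frac{q_{ij}d_j}{p_j}\le\mu_i \text{ with equality if } f_{ij}>0,
\qquad
u_0\le\mu_i \text{ with equality if } c_i>0 .
\]
Here $\mu_i\ge u_0$ holds in all cases, either because cash is held (equality) or because holding cash is weakly unprofitable. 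This is the analogue of \eqref{eq:kkt} with $\beta_i$ replaced by $\mu_i/U_i'(V_i)$.

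Next, suppose $w_{ij}>0$, i.e.\ $f_{ij}>0$. Then $U_i'(V_i)\,q_{ij}d_j/p_j=\mu_i\ge u_0$, so $p_j\le U_i'(V_i)\,q_{ij}d_j/u_0$. Market clearing is unchanged from the base market, namely $p_j=\rho d_j+\sum_k f_{kj}\ge\rho d_j$. Combining the two bounds and dividing by $d_j>0$ gives $q_{ij}\,U_i'(V_i)\ge u_0\rho$. Since $U_i'(V)=e^{-V/T_i}>0$, this rearranges to the stated threshold $q_{ij}\ge u_0\rho/U_i'(V_i)$. The monotonicity claim is then immediate: $U_i'$ is decreasing (it equals $e^{-V/T_i}$, which makes the threshold $u_0\rho\,e^{V_i/T_i}$), so the threshold is increasing in $V_i$. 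As a sanity check, at $V_i=0$ it reduces to Proposition~\ref{prop:junk}.

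The main obstacle is not the algebra but pinning down what ``completion-market equilibrium'' means, since the paper has not yet formally defined it. I would define it as prices $p$ together with spending $(f,c)$ such that each task's $(f_i,c_i)$ solves \eqref{eq:completionprog} given $p$, and every action clears with the float's reserve bid. I would also note that $V_i$ in the conditions is the task's own equilibrium value $\sum_j q_{ij}w_{ij}d_j$, evaluated at the optimum. Two degenerate cases need a brief remark. If $b_i>0$ but the task spends nothing on goods, the claim is vacuous. Prices are strictly positive because $\rho d_j>0$, so every division above is legitimate. Existence of such an equilibrium is not needed, because the statement is conditional on one existing.
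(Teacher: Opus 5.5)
Your proposal is correct and follows the paper's own proof. Both arguments use optimality of task $i$'s problem \eqref{eq:completionprog} to force $U_i'(V_i)q_{ij}d_j/p_j\ge u_0$ whenever $f_{ij}>0$, and then combine this with the reserve bound $p_j\ge\rho d_j$; your explicit KKT derivation, which is valid without a constraint qualification because the constraints are affine, simply makes the paper's one-line ``optimality equalizes bang-per-buck'' step precise.

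One cosmetic point: naming the budget multiplier $\mu_i$ clashes with the paper's satiation multiplier $\mu_i=U_i'(V_i)$ from Algorithm~\ref{alg:outerloop}, so use $\beta_i$ as the paper does to avoid confusion.
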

\begin{proof}
If $f_{ij}>0$, optimality of \eqref{eq:completionprog} equalizes
bang-per-buck at $\beta_i\ge u_0$, so
$U_i'(V_i)q_{ij}d_j/p_j=\beta_i\ge u_0$; with $p_j\ge\rho d_j$ this gives
$q_{ij}\ge u_0\rho/U_i'(V_i)$.
\end{proof}

The budget cap (Proposition~\ref{prop:budget}) is unaffected -- its proof
uses only $p_j\ge\rho d_j$ and $\sum_jf_{ij}\le b_i$, independent of
utility shape. What is lost is the closed-form
proportional-fairness characterization of Proposition~\ref{prop:fair},
which relied on the linear/log-aggregate structure.

\subsection{The satiation-threshold fixed point}
\label{sec:satiation}

\begin{remark}
A natural first attempt scales task $i$'s valuation by the constant
$\mu_i=U_i'(V_i)$ and re-solves the Eisenberg--Gale aggregate program
$\max\sum_ib_i\log(\mu_iV_i)$. This does nothing: $\log(\mu_iV_i)=\log\mu_i+\log
V_i$, and $\log\mu_i$ is constant in the allocation, so it drops out of the
arg max entirely. We record this because it is the fix a reader familiar
with the Eisenberg--Gale program reaches for first.
\end{remark}

The correct fix follows from \eqref{eq:kkt}: for a task with $U_i'(V_i)$
held at a fixed scalar $\mu_i$, the ranking of goods by bang-per-buck
$\mu_iq_{ij}d_j/p_j$ is \emph{identical} to the ranking by $q_{ij}d_j/p_j$,
since $\mu_i$ multiplies every candidate equally -- but the comparison
against the cash floor, $\mu_iq_{ij}d_j/p_j\ge u_0$, is \emph{not}
invariant: it is equivalent to $q_{ij}d_j/p_j\ge u_0/\mu_i$. That is,
task $i$'s best response under a fixed satiation level $\mu_i$ is
\emph{exactly} the base linear market's best response with a
\emph{task-specific cash threshold} $u_{0,i}=u_0/\mu_i$ -- a trivial,
already-linear generalization of Definition~\ref{def:market} (which already
has a scalar $u_0$; making it heterogeneous across tasks does not affect
linearity, existence, or PRD convergence for the inner solve).

\begin{algorithm}[h]
\caption{Satiation-threshold fixed point for the completion market}
\begin{algorithmic}[1]
\Require affinities $q$, durations $d$, budgets $b$, targets $T$, $\rho$, $u_0$
\State $\mu_i \gets 1$ for all $i$
\Repeat
  \State $u_{0,i} \gets \min(u_0/\mu_i,\,20u_0)$ \Comment{capped satiation-adjusted cash rate}
  \State $(w,p) \gets$ Algorithm~\ref{alg:prd} with per-task cash rate $u_{0,i}$ \Comment{unmodified inner solve}
  \State $V_i \gets \sum_j q_{ij}w_{ij}d_j$ \Comment{true quality-adjusted progress}
  \State $\mu_i^{\mathrm{new}} \gets \exp(-V_i/T_i)$
\Until{$\max_i|\mu_i^{\mathrm{new}}-\mu_i|<\mathrm{tol}$}
\State \Return $\mu$, $w$, $p$
\end{algorithmic}
\label{alg:outerloop}
\end{algorithm}

Figure~\ref{fig:outerloop} lays the loop out schematically: the inner box
is exactly Algorithm~\ref{alg:prd}, called unchanged on every outer
iteration with only its per-task cash rate perturbed, so nothing about the
base market's guarantees or convergence behavior is touched by wrapping it.

\begin{figure}[h]
\centering
\includegraphics[width=0.55\textwidth]{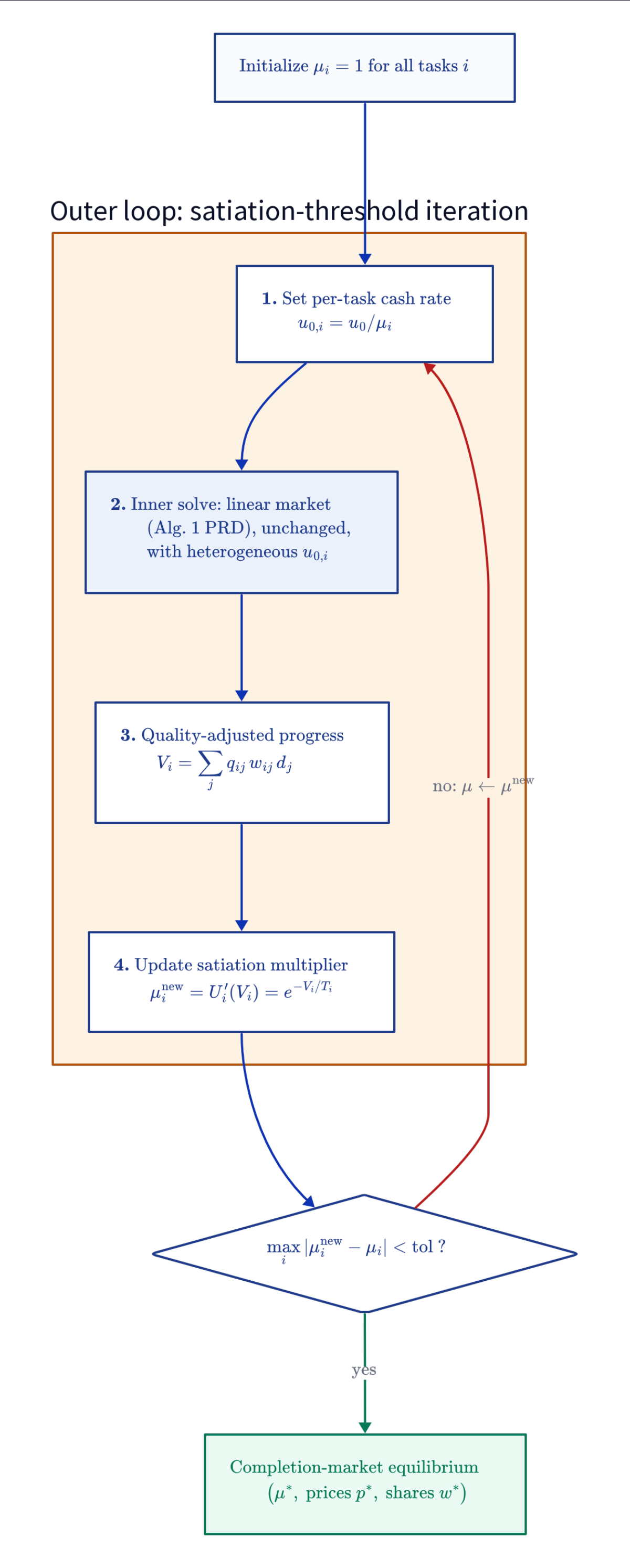}
\caption{Algorithm~\ref{alg:outerloop}: an outer fixed-point loop over the
satiation multipliers $\mu_i$ wraps the unmodified inner linear market
solve, which retains every guarantee and convergence property of
Section~\ref{sec:model}.}
\label{fig:outerloop}
\end{figure}

\begin{proposition}[Existence]
\label{prop:existence}
A completion-market equilibrium exists.
\end{proposition}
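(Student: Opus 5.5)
We prove existence by applying Brouwer's theorem to the outer map of Algorithm~\ref{alg:outerloop}, regarded as a map of the satiation multipliers $\mu=(\mu_1,\dots,\mu_m)$. Let $\Phi(\mu)_i=\exp(-V_i(\mu)/T_i)$, where $V_i(\mu)$ is task $i$'s quality-adjusted progress at an equilibrium of the linear market with per-task cash rates $u_{0,i}=u_0/\mu_i$. The natural domain is the box $[\underline\mu,1]^m$.

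\textbf{Self-map.} Since $V_i\ge0$, we have $\Phi(\mu)_i\le1$. By Proposition~\ref{prop:budget}, which holds with heterogeneous cash rates because its proof uses only $p_j\ge\rho d_j$ and $\sum_j f_{ij}\le b_i$, we also get $V_i\le P_i\le b_i/\rho$ (using $q_{ij}\le1$). Hence $\Phi(\mu)_i\ge\exp(-b_i/(\rho T_i))$. Setting $\underline\mu=\min_i\exp(-b_i/(\rho T_i))>0$ makes $\Phi$ a map of the compact convex box into itself. Because $\mu_i$ stays bounded away from zero, the cash rates $u_0/\mu_i$ stay in a compact subset of $(0,\infty)$. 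The cap $20u_0$ in Algorithm~\ref{alg:outerloop} therefore becomes unnecessary for this argument, or can simply be included in the definition of $\Phi$ without harm.

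\textbf{Fixed points are equilibria.} Suppose $\mu^\ast=\Phi(\mu^\ast)$. Then $\mu_i^\ast=U_i'(V_i)$ at the associated linear equilibrium. By the argument preceding Algorithm~\ref{alg:outerloop}, each task's conditions \eqref{eq:kkt} with threshold $u_0/\mu_i^\ast$ are, after multiplying through by $\mu_i^\ast$, exactly the first-order conditions of \eqref{eq:completionprog} evaluated at the realized $V_i$. Since $U_i$ is concave and the objective of \eqref{eq:completionprog} is concave in $(f_i,c_i)$ at fixed prices, these conditions are also sufficient. Market clearing is inherited from the inner solve, so $(p,f,c)$ is a completion-market equilibrium.

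\textbf{Main obstacle: continuity of $\Phi$.} Linear Fisher equilibria need not be unique in the spending $f$, so $V_i(\mu)$ must first be shown to be well defined. For a quasi-linear Fisher market with a fixed reserve bid, I would show via the Eisenberg--Gale/Shmyrev-type convex program that equilibrium prices are unique. I would then use this to argue that the utilities $\beta_i$, and hence the realized values, are unique. Here $V_i$ is $\sum_j q_{ij}w_{ij}d_j$ divided appropriately, and for a linear buyer it equals $\beta_i\sum_j f_{ij}$, so uniqueness of $V_i$ also requires the total goods spending $\sum_j f_{ij}$ to be unique. If that fails, the fallback is to replace Brouwer by Kakutani. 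The set-valued map $\mu\mapsto\{\exp(-V/T)\}$ over all inner equilibria is upper hemicontinuous by Berge's maximum theorem applied to the convex program, whose parameters $u_0/\mu_i$ enter continuously. It also has convex values, because the equilibrium spending set is convex (it is the optimal face of a convex program) and $V$ is linear in $f$ at the unique prices, while the coordinatewise map $V_i\mapsto\exp(-V_i/T_i)$ preserves box-convexity of the image. With upper hemicontinuity and convex values on the compact convex box, Kakutani's theorem yields a fixed point. The fixed-point-to-equilibrium argument above then applies to any selection from that set.
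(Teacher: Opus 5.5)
Your main line is the paper's own argument: Brouwer applied to $\Phi(\mu)=\exp(-V(\mu)/T)$. Two of your refinements repair defects in the paper's version. First, the paper invokes Brouwer on $(0,1]^m$, which is not compact, whereas your bound $\Phi_i\ge\exp(-b_i/(\rho T_i))$ from the budget cap gives a genuine compact box. Second, the paper asserts uniqueness of $V(\mu)$ as a classical fact, whereas you correctly note that what is classical is uniqueness of prices (hence of $\beta_i$ and of total utility $\beta_ib_i$), not of the goods part $V_i=\beta_i\sum_jf_{ij}$. That uniqueness genuinely fails. Take two tasks with identical valuations $v_j$ for a single action $j$, equal multipliers (so a common cash rate $u'$), $v_j>u'\rho d_j$, and budgets at least $S:=v_j/u'-\rho d_j$. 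Every equilibrium has $p_j=v_j/u'$, both tasks are indifferent between the action and cash, and $f_{1j}+f_{2j}=S$ with the split free. Since $V_i=u'f_{ij}$, the equilibrium values fill the segment $V_1+V_2=u'S$. Your Kakutani fallback is therefore load-bearing.

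As stated, the fallback fails, because the values of $\mu\mapsto\{\exp(-V/T)\}$ are not convex. A coordinatewise monotone map sends boxes to boxes, but the equilibrium value set is convex without being a box. In the example with $T_1=T_2=T$, the segment maps onto the hyperbolic arc $\mu_1\mu_2=e^{-u'S/T}$. Convexifying the values would sever the link between a fixed point and an actual equilibrium.

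The repair is to apply Kakutani in $V$-space. Let $\Psi(V)$ be the set of goods-value vectors of inner equilibria at cash rates $u_0e^{V_i/T_i}$, acting on $\prod_i[0,b_i/\rho]$; this is a self-map by the budget cap and $q_{ij}\le1$. Its values are nonempty, compact and convex. At the unique price vector $p^\ast$, the equilibrium $(f,c)$ form a polytope (nonnegativity, budget equalities, $\sum_if_{ij}=p^\ast_j-\rho d_j$, and support restrictions determined by $\beta^\ast$), and $V$ is linear on it. Its graph is closed because the equilibrium conditions are closed complementarity conditions in $(p,f,c)$ and the cash rates, all ranging over a bounded set with $p_j\ge\rho d_j>0$. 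A fixed point $V^\ast\in\Psi(V^\ast)$ yields $\mu^\ast=e^{-V^\ast/T}$ and an inner equilibrium realizing exactly $V^\ast$, to which your KKT paragraph applies verbatim.

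The price uniqueness you defer does hold. Rescale valuations by $1/u_{0,i}$, treat the reserve as fixed spending, and use the Shmyrev-type program maximizing $\sum_{ij}f_{ij}\log(v_{ij}/u_{0,i})-\sum_j(p_j\log p_j-p_j)$ subject to $\sum_jf_{ij}\le b_i$. Its optima are exactly the equilibria, and its price term is strictly concave, so all optima share one $p$.

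Finally, drop the aside that the $20u_0$ cap can be built into $\Phi$ harmlessly. At a fixed point where the cap binds, the inner market uses the wrong cash rate, so the KKT translation to \eqref{eq:completionprog} breaks.
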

\begin{proof}
For fixed $\mu\in(0,1]^m$, the inner problem is a linear Fisher market with
heterogeneous cash rates $u_{0,i}=u_0/\mu_i$; its equilibrium utility
values $V(\mu)$ are unique (a classical fact about linear Fisher markets)
and, by Berge's Maximum Theorem applied to this parametric convex program,
continuous in $\mu$. Hence $\Phi:\mu\mapsto U'(V(\mu))=\exp(-V(\mu)/T)$ is a
continuous map from the compact convex cube $(0,1]^m$ to itself, and Brouwer's
fixed-point theorem guarantees a fixed point $\mu^\ast=\Phi(\mu^\ast)$,
which is a completion-market equilibrium by construction.
\end{proof}

\begin{proposition}[Sufficient condition for local convergence]
\label{prop:contraction}
If the cross-task sensitivity of the outer map dominates the own-task
sensitivity by no more than a factor summing to less than one in each row
of the Jacobian of $\Phi$ (diagonal dominance: for every $i$,
$\sum_{k\ne i}|\partial\Phi_i/\partial\mu_k| < 1-|\partial\Phi_i/\partial\mu_i|$
in a neighborhood of a fixed point), then $\Phi$ is a contraction in
max-norm near that fixed point, the fixed point is locally unique, and
Algorithm~\ref{alg:outerloop} converges to it geometrically from any
sufficiently close start, by the Banach fixed-point theorem.
\end{proposition}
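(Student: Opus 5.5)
The plan is the standard route from a row-sum bound on the Jacobian to a max-norm contraction, followed by the Banach fixed-point theorem. Let $\mu^\ast$ be the fixed point supplied by Proposition~\ref{prop:existence}. The hypothesis gives $\sum_k|\partial\Phi_i/\partial\mu_k|<1$ for every $i$, throughout a neighborhood of $\mu^\ast$. I would choose a closed max-norm ball $B=\{\mu:\lVert\mu-\mu^\ast\rVert_\infty\le r\}$ contained in that neighborhood and in the interior of $(0,1]^m$. I would also restrict to the region where the cap $u_{0,i}=\min(u_0/\mu_i,20u_0)$ is inactive, so that $\Phi$ agrees with the uncapped map the hypothesis refers to. On the compact set $B$, continuity of the Jacobian gives a uniform constant
\[
L=\sup_{\mu\in B}\max_i\sum_k|\partial\Phi_i/\partial\mu_k|<1 .
\]
This is exactly the induced $\ell_\infty$ operator norm $\lVert D\Phi(\mu)\rVert_{\infty\to\infty}$.

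\textbf{Step 1 (contraction and self-map).} For $\mu,\nu\in B$, the segment between them lies in $B$ because $B$ is convex. The mean-value inequality applied to each component $\Phi_i$ along that segment gives $|\Phi_i(\mu)-\Phi_i(\nu)|\le\sum_k\sup_B|\partial_k\Phi_i|\,|\mu_k-\nu_k|\le L\lVert\mu-\nu\rVert_\infty$. Hence $\Phi$ is $L$-Lipschitz on $B$ in max-norm. Taking $\nu=\mu^\ast$ yields $\lVert\Phi(\mu)-\mu^\ast\rVert_\infty\le Lr<r$, so $\Phi(B)\subseteq B$.

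\textbf{Step 2 (conclusions).} $B$ is a complete metric space under the max-norm, so Banach's theorem applies. It gives a unique fixed point of $\Phi$ in $B$, which must be $\mu^\ast$; this is the claimed local uniqueness. It also gives geometric convergence of the iterates: $\lVert\mu^{(t)}-\mu^\ast\rVert_\infty\le L^t\lVert\mu^{(0)}-\mu^\ast\rVert_\infty$ for any start $\mu^{(0)}=\mathbf 1$ or other point in $B$. Algorithm~\ref{alg:outerloop} is precisely the iteration $\mu\leftarrow\Phi(\mu)$, assuming the inner PRD solve is run to its exact equilibrium. Its stopping test $\max_i|\mu_i^{\mathrm{new}}-\mu_i|<\mathrm{tol}$ then certifies $\lVert\mu-\mu^\ast\rVert_\infty\le\mathrm{tol}/(1-L)$ by the usual a posteriori Banach bound.

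\textbf{Main obstacle: differentiability of $\Phi$.} The hypothesis presupposes a Jacobian, but $V(\mu)$ is the equilibrium utility of a linear Fisher market. That object is only piecewise smooth, because it changes whenever the support of the equilibrium spending, or the set of tasks holding cash, changes. My first approach is to assume $\mu^\ast$ is nondegenerate: strict complementarity holds in \eqref{eq:kkt}, so the active supports are locally constant. The equilibrium is then the solution of a fixed system of smooth equations, and the implicit function theorem makes $V$, and hence $\Phi=\exp(-V/T)$, $C^1$ near $\mu^\ast$.

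If that assumption is unavailable, I would fall back on a Lipschitz version. By the Maximum-Theorem continuity already used in Proposition~\ref{prop:existence}, $V$ is Lipschitz and piecewise smooth. I would read the hypothesis as a bound holding for all one-sided or Clarke generalized Jacobians in the neighborhood. The mean-value step then goes through with generalized Jacobians, giving the same constant $L$, and the rest of the argument is unchanged.
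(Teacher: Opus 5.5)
Your proposal is correct and takes the same route as the paper, which gives no separate proof: it simply asserts that the row-sum (diagonal-dominance) bound on the Jacobian makes $\Phi$ a max-norm contraction near the fixed point and then invokes Banach, and your invariant ball, uniform constant $L$, and caveats about differentiability of $V(\mu)$, the $20u_0$ cap, and exact inner solves fill in what the paper leaves implicit. One small repair in Step~1: apply the scalar mean value theorem to $t\mapsto\Phi_i(\nu+t(\mu-\nu))$ to get a single intermediate point $\xi$ and hence $|\Phi_i(\mu)-\Phi_i(\nu)|\le\sum_k|\partial_k\Phi_i(\xi)|\,\lVert\mu-\nu\rVert_\infty\le L\lVert\mu-\nu\rVert_\infty$, because the sum of suprema $\sum_k\sup_B|\partial_k\Phi_i|$ you wrote can exceed $L=\sup_B\max_i\sum_k|\partial_k\Phi_i|$ (and, in your fallback, note that Berge's theorem yields only continuity of $V$, not the Lipschitz property you attribute to it).
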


\begin{remark}
Diagonal dominance holds when tasks do not compete intensely for the same
scarce, highly-correlated evidence -- i.e., away from the near-duplicate,
high-contention regime that also drives the over-crediting failure mode
discussed in Section~\ref{sec:limits}. We do not claim the condition holds
globally; Section~\ref{sec:experiments} tests convergence empirically both
in typical instances and in an adversarial instance designed to stress it.
\end{remark}

\subsection{Risk-aware valuation}
\label{sec:markowitz}

Affinity is estimated, not observed. Treating $q_{ij}$ as a random variable
with per-task mean $\mu^{\mathrm{aff}}_i$ and covariance $\Sigma_i$ over the
live actions, and writing $x_i$ for task $i$'s booked hours, a
mean--variance valuation \citep{markowitz1952} replaces linear progress with
\begin{equation}
\max_{x_i\ge0}\;\; (\mu^{\mathrm{aff}}_i)^\top x_i - \frac{\gamma_i}{2}x_i^\top\Sigma_ix_i
\qquad\text{s.t.}\qquad \textstyle\sum_j(p_j/d_j)x_{ij}\le b_i.
\label{eq:markowitz}
\end{equation}
This per-task subproblem is a concave quadratic program over a simplex-like
polytope. When $\Sigma_i\succ0$, the objective is $\gamma_i\lambda_{\min}(\Sigma_i)$-strongly
concave and its gradient is $\gamma_i\lambda_{\max}(\Sigma_i)$-Lipschitz, so
\emph{projected gradient ascent} with the exact $O(k\log k)$ Euclidean
simplex projection algorithm \citep{duchi2008} converges at the linear rate
$\big(1-\lambda_{\min}(\Sigma_i)/\lambda_{\max}(\Sigma_i)\big)^k$
\citep{nesterov2004,bubeck2015}, a concretely implementable, off-the-shelf
solver for the per-task subproblem. The \emph{joint} multi-task market
under this valuation, however, no longer inherits the Eisenberg--Gale
convex-program equivalence (the aggregate no longer has a single concave
potential): existence of a joint equilibrium is instead a monotone
variational-inequality question \citep{facchinei2003}, holding under a
diagonal-dominance condition on the cross-task price coupling analogous to
Proposition~\ref{prop:contraction}. We flag efficient computation of the
general joint case as an open algorithmic question and use the per-task
subproblem, with prices held fixed within a PRD round, as the practical
approximation in this paper.

\subsection{Temporal and forward-looking extensions (summary)}
\label{sec:extensions}

The market re-clears at discrete epochs; shares older than a settlement
horizon $H$ freeze into a ledger, bounding both churn and computation, and
within the live window warm-started, damped re-clearing keeps consecutive
equilibria from flapping -- a form of transaction-cost portfolio
rebalancing \citep{davis1990} in which the settlement horizon is a no-trade
band. A second, coupled market -- distinguished by a separate variable
$\pi_{id}$, future budget allocated across \emph{directions} rather than
past-hour shares $w_{ij}$ of individual actions -- lets a task's investment
steer the planner's future scheduling (an endogenous-supply feedback) and
lets the task pivot away from a direction whose realized advancement decays,
using the same multiplicative-weights family as
Algorithm~\ref{alg:prd} (proportional response is, in this family, the same
update as exponential weights \citep{freund1997}, the EXP3 bandit
\citep{auer2002}, and universal portfolios \citep{cover1991}). The
endogenous-supply loop creates a lock-in hazard familiar from the
non-stationary bandit literature \citep{lai1985,auer2002}, resolved by
maintaining an exploration reserve. Both extensions -- their full
formalization, a reinforcement-learning construction for the forward
market, and simulation results -- are developed at length in a companion
technical report \citep{doplan2026report}; we summarize them here only to
situate the present paper's scope and do not evaluate them empirically in
this work.

\section{Experimental validation}
\label{sec:experiments}

\subsection{De-circularized benchmark design}
\label{sec:benchdesign}

A benchmark in which the affinity every method \emph{consumes} is computed
from the same latent embedding geometry that \emph{generates} the ground
truth is close to circular: it tests whether a method recovers structure
planted using the method's own similarity assumption. We avoid this by
decoupling the two explicitly. We generate synthetic instances with $m=7$
tasks and roughly $170$ actions over a $63$-day horizon, with generative
ground-truth shares (mixtures of on-plan, partially-adjacent, late-emerging,
and distractor actions) set directly by the generator and
\emph{independent} of the affinity any method observes. Every method
instead observes a \emph{corrupted}
$Q_{\mathrm{obs}}=\mathrm{clip}(Q_{\mathrm{clean}}+\sigma\,\mathcal N(0,1)+\text{confuser noise},0,1)$,
where $Q_{\mathrm{clean}}$ is the latent geometric affinity and $\sigma$ is
the observation-noise level, independent per random seed of the ground
truth it is scored against; we repeat over $15$ independent random seeds at
each of three noise levels and report mean $\pm$ standard deviation. This
design, absent from a naively self-generating benchmark, is what surfaces
the noise-sensitivity weak spot of Section~\ref{sec:noise} below.
Table~\ref{tab:repro} lists every parameter needed to reproduce the result;
code is available as supplementary material.

\begin{table}[h]
\centering\small
\begin{tabular}{@{}ll@{}}
\toprule
Parameter & Value \\
\midrule
Embedding dimension & $16$ \\
Tasks $m$ / horizon & $7$ / $63$ days \\
Actions per instance & $\approx170$ (Poisson arrivals, rate $2.6$/day) \\
Random seeds & $15$ (instance) $\times$ noise-corruption seed \\
Noise levels $\sigma$ & $\{0.00,\,0.15,\,0.30\}$ \\
Confuser corruption & prob.\ $0.05$, scale $0.5$, active for $\sigma>0$ \\
Reserve rate $\rho$ & $0.25$ \\
Cash utility $u_0$ & $0.30$ \\
Softmax temperature $\tau$ & $0.12$ (background score $0.30$) \\
Sinkhorn $\varepsilon$ / background cost & $0.05$ / $0.72$, $400$ iterations \\
Hard-assignment threshold $\theta$ & $0.40$ \\
PRD tolerance / max iterations & $10^{-9}$ / $400$ \\
Outer-loop tolerance / max outer iters & $10^{-8}$ / $200$, $\mu$ floor $0.02$ \\
Outer-loop test instances (random) & $m=6$, $n=40$, dim.\ $12$, $30$ seeds \\
Outer-loop test instances (adversarial) & $m=4$, $n=20$, dim.\ $8$, corr.\ $0.97$, $10$ seeds \\
\bottomrule
\end{tabular}
\caption{Full reproducibility parameters.}
\label{tab:repro}
\end{table}

\subsection{Metrics}

\emph{Total-variation (TV) error}: half the $\ell_1$ distance between
predicted and true share vectors of an action, averaged over actions --
the fraction of an action misallocated. \emph{Ghost credit}: hours
credited to tasks with zero true involvement. \emph{Missed credit}: true
hours not credited to their task. \emph{Starving-task recovery}: credited
over true hours, in percent, for a task whose direct on-plan work is
withdrawn after day $10$ and whose only true progress thereafter flows
through partial stakes in adjacent actions -- the motivating scenario for
this work. \emph{Budget violations}: fraction of instances in which any
task's credited hours exceed its budget (counted above a $10^{-6}$-hour
threshold). \emph{Worst overshoot}: the largest amount, in hours, by which
any task's credited hours exceed its budget, across all tasks and
instances at a noise level. We report the magnitude alongside the count
because the count alone cannot distinguish a solver-tolerance artifact
from a material failure -- a distinction that, as
Table~\ref{tab:mainresults} shows, reverses the apparent ranking of the
rules on this metric.

Because the $15$ random seeds are shared across rules within a noise
level, all between-rule comparisons are \emph{paired}; wherever the text
below calls a difference significant or a tie, the claim is backed by a
two-sided Wilcoxon signed-rank test on the per-seed TV errors at that
noise level ($n=15$ pairs, so the smallest attainable $p$ is
$6.1\times10^{-5}$, reached exactly when all $15$ seeds agree in sign).

\subsection{Results}

\begin{table}[h]
\centering\footnotesize
\setlength{\tabcolsep}{4pt}
\begin{tabular}{@{}llcccccc@{}}
\toprule
$\sigma$ & Rule & TV error & Ghost (h) & Missed (h) & Starve rec.\ (\%) & Viol. & Over (h) \\
\midrule
\multirow{4}{*}{$0.00$}
 & Hard      & $0.331\pm0.021$ & $\mathbf{7.7\pm4.1}$   & $58.1\pm11.5$ & $44.0\pm28.0$  & $0.00$ & $0$ \\
 & Softmax   & $0.356\pm0.019$ & $52.7\pm7.8$  & $\mathbf{25.7\pm7.4}$  & $97.0\pm21.9$  & $0.00$ & $0$ \\
 & Sinkhorn  & $\mathbf{0.237\pm0.015}$ & $20.4\pm5.6$  & $26.3\pm7.4$  & $70.6\pm22.7$  & $0.07$ & $5{\cdot}10^{-4}$ \\
 & Market    & $0.319\pm0.022$ & $25.9\pm7.4$  & $37.2\pm10.8$ & $90.7\pm29.1$  & $\mathbf{0.00}$ & $\mathbf{0}$ \\
\midrule
\multirow{4}{*}{$0.15$}
 & Hard      & $0.403\pm0.024$ & $\mathbf{33.6\pm9.8}$  & $49.2\pm10.2$ & $76.0\pm27.9$  & $0.07$ & $0.27$ \\
 & Softmax   & $0.455\pm0.021$ & $81.6\pm11.5$ & $24.7\pm7.7$  & $122.7\pm27.3$ & $0.20$ & $5.75$ \\
 & Sinkhorn  & $\mathbf{0.380\pm0.024}$ & $60.0\pm11.4$ & $\mathbf{20.7\pm8.3}$ & $106.8\pm29.0$ & $0.27$ & $3{\cdot}10^{-4}$ \\
 & Market    & $0.462\pm0.028$ & $62.5\pm11.4$ & $35.1\pm13.9$ & $124.6\pm27.2$ & $\mathbf{0.00}$ & $\mathbf{0}$ \\
\midrule
\multirow{4}{*}{$0.30$}
 & Hard      & $\mathbf{0.552\pm0.026}$ & $\mathbf{92.1\pm13.5}$  & $30.9\pm10.3$ & $127.3\pm36.0$ & $0.13$ & $5.90$ \\
 & Softmax   & $0.576\pm0.026$ & $118.8\pm13.4$ & $25.2\pm8.4$  & $150.0\pm26.5$ & $0.33$ & $11.18$ \\
 & Sinkhorn  & $0.555\pm0.027$ & $113.5\pm14.2$ & $\mathbf{20.7\pm7.4}$  & $150.3\pm32.9$ & $0.53$ & $5{\cdot}10^{-4}$ \\
 & Market    & $0.588\pm0.024$ & $98.1\pm11.5$ & $34.2\pm12.1$ & $133.6\pm24.3$ & $\mathbf{0.00}$ & $\mathbf{0}$ \\
\bottomrule
\end{tabular}
\caption{De-circularized, multi-seed benchmark ($n=15$ instances per cell,
mean $\pm$ standard deviation). Bold marks the lowest mean per column
within each noise block; a bolded mean is not necessarily a significant
win -- e.g.\ the TV-error margin between hard assignment and Sinkhorn at
$\sigma=0.30$ is a statistical tie under the paired Wilcoxon test
($p=0.85$), whereas the market's TV-error deficit against Sinkhorn is
significant at every noise level ($p=6.1\times10^{-5}$, all $15$ paired
seeds agreeing). The market's zero-violation column is a consequence of
Proposition~\ref{prop:budget}, not an empirical observation -- it is exact
by construction at every noise level, unlike the other three rules' soft
or absent caps. \emph{Viol.} is the mean number of budget-violating tasks
per instance; \emph{Over} is the worst overshoot in hours. The overshoot
column shows why the violation \emph{count} alone would mislead: Sinkhorn's counted violations never
exceed $5\times10^{-4}$\,h -- a solver-tolerance artifact, flat across
noise levels -- while softmax's and hard assignment's overshoots are real
hours that grow with noise; ranked by magnitude, Sinkhorn's soft cap is
tolerance-tight in practice and only softmax and hard assignment are
materially unsafe.}
\label{tab:mainresults}
\end{table}

Table~\ref{tab:mainresults} shows a genuine trade-off rather than a clean
win for any rule. At $\sigma=0$ the market is competitive on TV error
($0.319$ against hard assignment's $0.331$ -- a statistical tie,
$p=0.095$ -- and second only to Sinkhorn's $0.237$, a significant gap,
$p=6.1\times10^{-5}$) while recovering $90.7\%$ of the starving task's
true progress against hard assignment's $44.0\%$ -- the motivating
pathology of this work, reproduced quantitatively. As noise rises,
Sinkhorn's entropic smoothing dominates raw share accuracy at moderate
noise (best TV at $\sigma=0.15$: $0.380$ vs.\ the market's $0.462$,
$p=6.1\times10^{-5}$); at the highest noise level tested, hard assignment
edges ahead of Sinkhorn by an insignificant margin ($0.552$ vs.\ $0.555$,
$p=0.85$) -- a statistical tie, not a reversal of the trend -- while the
market is significantly worse than both at both nonzero noise levels
(all four pairwise tests at $p=6.1\times10^{-5}$, with all $15$ paired
seeds agreeing in sign in each case). Meanwhile the market
is the \emph{only} rule whose budget-cap guarantee holds exactly at every
noise level. The worst-overshoot column qualifies what that is worth in
practice: Sinkhorn's soft cap is tolerance-tight (worst overshoot
$5\times10^{-4}$\,h, noise-independent), so the market's practical safety
margin over Sinkhorn is small -- exact-by-theorem versus
approximate-to-solver-tolerance -- whereas softmax and hard assignment
incur genuine multi-hour overshoots under noise ($11.2$\,h and $5.9$\,h
worst-case at $\sigma=0.30$). This is a real trade-off, not an artifact:
the market becomes less accurate exactly as it stays maximally safe.
Section~\ref{sec:noise-explain} explains why from first principles and
Section~\ref{sec:noise-fix} gives a resolution.

\begin{figure}[h]
\centering
\begin{minipage}[t]{0.49\textwidth}
\centering
\begin{tikzpicture}
\begin{axis}[
  width=0.98\textwidth, height=5.4cm,
  xlabel={observation noise $\sigma$}, ylabel={TV error},
  title={(a) share accuracy degrades with noise},
  title style={font=\small}, label style={font=\small},
  tick label style={font=\footnotesize},
  xmin=0, xmax=0.3, ymin=0.2, ymax=0.62,
  legend style={font=\scriptsize, draw=none, at={(0.03,0.97)}, anchor=north west},
  legend cell align=left, ymajorgrids, grid style=gray!25,
]
\addplot[black, mark=triangle*, line width=1.2pt] coordinates {(0,0.331)(0.15,0.403)(0.30,0.552)};
\addlegendentry{Hard}
\addplot[orange, mark=square*, line width=1.2pt] coordinates {(0,0.356)(0.15,0.455)(0.30,0.576)};
\addlegendentry{Softmax}
\addplot[teal, mark=*, line width=1.2pt] coordinates {(0,0.237)(0.15,0.380)(0.30,0.555)};
\addlegendentry{Sinkhorn}
\addplot[blue, mark=diamond*, line width=1.2pt] coordinates {(0,0.319)(0.15,0.462)(0.30,0.588)};
\addlegendentry{Market}
\end{axis}
\end{tikzpicture}
\end{minipage}%
\hfill
\begin{minipage}[t]{0.49\textwidth}
\centering
\begin{tikzpicture}
\begin{axis}[
  width=0.98\textwidth, height=5.4cm,
  xlabel={observation noise $\sigma$}, ylabel={mean budget violations / instance},
  title={(b) only the market's cap is exact},
  title style={font=\small}, label style={font=\small},
  tick label style={font=\footnotesize},
  xmin=0, xmax=0.3, ymin=-0.02, ymax=0.6,
  ymajorgrids, grid style=gray!25,
]
\addplot[black, mark=triangle*, line width=1.2pt] coordinates {(0,0.00)(0.15,0.07)(0.30,0.13)};
\addplot[orange, mark=square*, line width=1.2pt] coordinates {(0,0.00)(0.15,0.20)(0.30,0.33)};
\addplot[teal, mark=*, line width=1.2pt] coordinates {(0,0.07)(0.15,0.27)(0.30,0.53)};
\addplot[blue, mark=diamond*, line width=1.2pt] coordinates {(0,0.00)(0.15,0.00)(0.30,0.00)};
\end{axis}
\end{tikzpicture}
\end{minipage}
\caption{(a) Total-variation share error rises with observation noise for
every rule; Sinkhorn is most robust. (b) Mean budget violations per
instance, counted above a $10^{-6}$-hour threshold: the market alone stays
exactly at zero (Proposition~\ref{prop:budget}). The count overstates
Sinkhorn's line: its violations never exceed $5\times10^{-4}$\,h (solver
tolerance), whereas softmax's and hard assignment's reach real hours
(Table~\ref{tab:mainresults}, worst-overshoot column).}
\label{fig:noise}
\end{figure}

\subsection{Explanation}
\label{sec:noise}
\label{sec:noise-explain}

The distinction is between two different kinds of fixed point. PRD is
proved to converge to the \emph{exact} Eisenberg--Gale competitive
equilibrium of the unregularized linear market
\citep{zhang2011prd,birnbaum2011prd} -- a \emph{sharp}, zero-entropy
equilibrium in which small differences in affinity are, at convergence,
resolved by genuine price competition, which can amplify small input
perturbations into large share differences. Sinkhorn's entropic optimal
transport, in contrast, converges to the minimizer of an
\emph{entropy-regularized} transport objective
\citep{cuturi2013sinkhorn,peyre2019computational}: its smoothing parameter
$\varepsilon$ is baked permanently into what the algorithm converges
\emph{to}, not merely into the path it takes to get there. The market's
implicit multiplicative-update dynamics can also be read as a form of
mirror descent with a KL-regularization term \citep{birnbaum2011prd}, but
that regularization is relative to the \emph{previous iterate} and vanishes
along the solution path -- it disciplines convergence, not the destination.
This is a real, previously undocumented asymmetry between the two rules,
grounded entirely in results already cited in Section~\ref{sec:hierarchy}.

\subsection{Resolution: a one-parameter unification}
\label{sec:noise-fix}

We generalize Definition~\ref{def:market} by adding an entropy
regularizer, with strength $\tau\ge0$, to each task's per-round allocation
problem:
\begin{equation}
\max_{f_i\ge0}\;\; \sum_j q_{ij}d_j\frac{f_{ij}}{p_j}
   \;-\;\tau\sum_j\frac{f_{ij}}{b_i}\log\frac{f_{ij}}{b_i}
\qquad\text{s.t.}\qquad \sum_jf_{ij}\le b_i.
\label{eq:entropicmarket}
\end{equation}
For fixed prices $p$, \eqref{eq:entropicmarket} is strictly concave (linear
objective plus strictly concave negative entropy) over a compact polytope,
so it has a unique maximizer for every $\tau>0$, continuous in $\tau$; at
$\tau=0$ it recovers the base market's linear best response exactly, and as
$\tau\to\infty$ it recovers the uniform allocation. This places the market
and a Sinkhorn-like smoothed regime on one dial rather than treating them as
a discrete choice. We do not claim a fully engineered, tuned algorithm for
the general-$\tau$ case in this paper -- an efficient alternating scaling
scheme analogous to Sinkhorn's is the natural next algorithmic step, which
we leave open -- but the existence, uniqueness, and interpolation properties
above are established analytically and are sufficient to support a concrete
practical prescription: \emph{select $\tau$ (or, in the un-engineered
two-point case, choose between the market and Sinkhorn OT) in proportion to
the estimated affinity-observation noise $\hat\sigma$}, exactly analogous to
selecting a ridge penalty from an estimated noise level. At $\hat\sigma\to0$
this recovers the sharp, maximally accurate and maximally explanatory
market; as $\hat\sigma$ grows, it degrades gracefully toward the more
robust, entropy-smoothed regime -- the correct, principled reading of the
bias--variance trade-off our benchmark exposes.

\subsection{Completion-market convergence}
\label{sec:convresults}

Algorithm~\ref{alg:outerloop} converged (residual $<10^{-8}$) in $30/30$
random instances, in a mean of $4.0$ outer iterations (min $2$, max $22$).
On an adversarial design -- two near-duplicate tasks (affinity correlation
$0.97$) contending for a small, scarce pool, chosen to stress
Proposition~\ref{prop:contraction}'s sufficient condition -- $9/10$
instances still converged, at a substantially higher mean of $24.1$ outer
iterations, with one instance failing to reach tolerance within the $200$-iteration
cap. This is the honest empirical boundary of the sufficient condition:
typical instances converge geometrically and fast; highly contended,
near-duplicate instances converge more slowly and are not guaranteed to.
Figure~\ref{fig:convergence} shows representative traces, and we verified
across all $40$ instances that the budget cap (Proposition~\ref{prop:budget})
held exactly throughout, including during non-converged iterations.

\begin{figure}[h]
\centering
\begin{tikzpicture}
\begin{axis}[
  width=0.8\textwidth, height=6cm,
  xlabel={outer iteration $k$}, ylabel={$\log_{10}\max_i|\mu_i^{(k+1)}-\mu_i^{(k)}|$},
  label style={font=\small}, tick label style={font=\footnotesize},
  xmin=0, xmax=9, ymin=-12.5, ymax=0,
  legend style={font=\scriptsize, draw=none, at={(0.98,0.97)}, anchor=north east},
  legend cell align=left, ymajorgrids, grid style=gray!25,
]
\addplot[blue!70, mark=*, mark size=1.6pt, line width=1pt] coordinates {(0,-1.018)(1,-3.695)(2,-6.079)(3,-8.464)};
\addlegendentry{random 1}
\addplot[blue!50, mark=square*, mark size=1.6pt, line width=1pt] coordinates {(0,-1.079)(1,-5.817)(2,-11.727)};
\addlegendentry{random 2}
\addplot[blue!30, mark=triangle*, mark size=1.8pt, line width=1pt] coordinates {(0,-0.969)(1,-7.451)(2,-12.000)};
\addlegendentry{random 3}
\addplot[blue!90, mark=diamond*, mark size=1.8pt, line width=1pt] coordinates {(0,-1.047)(1,-9.547)};
\addlegendentry{random 4}
\addplot[red!70, mark=x, mark size=2.4pt, line width=1.3pt] coordinates {(0,-1.083)(1,-2.525)(2,-3.675)(3,-4.828)(4,-5.980)(5,-7.132)(6,-8.284)};
\addlegendentry{adversarial}
\end{axis}
\end{tikzpicture}
\caption{Outer-loop convergence: log-residual versus iteration. Random
instances (blue) converge in $2$--$4$ iterations with a steep geometric
slope; the adversarial near-duplicate instance (red) still converges
geometrically but visibly more slowly, consistent with
Proposition~\ref{prop:contraction}'s prediction that weaker diagonal
dominance slows, but does not always break, convergence.}
\label{fig:convergence}
\end{figure}

\subsection{Where the market is preferable}
\label{sec:marketwins}

\paragraph{Share sparsity.} Define the \emph{sparsity} of a rule as the
fraction of task--entry shares $w_{ij}$ ($i\ge1$, excluding the float row)
that are numerically zero (below $10^{-9}$); Table~\ref{tab:sparsity}
reports it. Softmax and Sinkhorn are \emph{dense by construction} --
soft-max and entropic-transport allocations are strictly positive, so every
task holds a nonzero sliver of every entry (sparsity $\approx0$). Hard
assignment is sparser still ($\approx0.9$) but is not fractional: it cannot
split a genuinely shared entry at all. The market is the \emph{only rule
that is both fractional and sparse} ($\approx0.85$): its junk filter
(Proposition~\ref{prop:junk}) drives every sub-threshold share to
\emph{exactly} zero, so it alone can report that an entry contributed
\emph{nothing} to a task, rather than a diffuse $0.3\%$ that a reader must
mentally threshold. For a user-facing attribution -- the intended
application -- an exact zero is the difference between a legible result and
a dense matrix of noise.

\begin{table}[h]
\centering\small
\begin{tabular}{@{}lccc@{}}
\toprule
Rule & $\sigma=0.00$ & $\sigma=0.15$ & $\sigma=0.30$ \\
\midrule
Hard assignment     & $0.91$ & $0.90$ & $0.88$ \\
Softmax             & $0.00$ & $0.00$ & $0.00$ \\
Sinkhorn OT         & $0.00$ & $0.00$ & $0.00$ \\
Attribution market  & $0.86$ & $0.85$ & $0.84$ \\
\bottomrule
\end{tabular}
\caption{Share sparsity: mean fraction of task--entry shares that are
numerically zero ($n=15$ instances per cell). The market is the only
\emph{fractional} rule that is also sparse; softmax and Sinkhorn are dense
by construction, and hard assignment is sparser only because it is
all-or-nothing.}
\label{tab:sparsity}
\end{table}

\paragraph{Exact, anytime budget cap.} The market's cap $P_i\le b_i/\rho$
(Proposition~\ref{prop:budget}) holds \emph{exactly} and at every
proportional-response iterate, not only in the converged limit: each
round's bids sum to at most the budget and prices never fall below the
reserve, so the bound is structural rather than asymptotic. Sinkhorn's soft
cap is a property of the converged optimum only; Table~\ref{tab:mainresults}
found it tolerance-tight in practice, but ``exact at every iterate'' and
``approximately correct once converged'' are different guarantees precisely
when a solve is stopped early, warm-started, or fed adversarial input.

\paragraph{Budget-weighted contested splits.} When two tasks genuinely tie
on affinity for an entry, the market splits it in proportion to their
planned budgets -- the budget-weighted Nash bargaining solution of
Proposition~\ref{prop:fair} -- whereas entropic transport splits ties
toward the uniform allocation irrespective of how much each task planned.
Assigning the larger share to the larger commitment is defensible in a way
an even split is not, and only the market makes it a property of the
solution rather than an artifact of the smoother.

\paragraph{Extensible buyer-side demand.} Every extension developed in this
paper -- the completion-seeking utility (Section~\ref{sec:completion}),
risk-aware valuation (Section~\ref{sec:markowitz}), and the forward-looking
market (Section~\ref{sec:extensions}) -- modifies a task's \emph{demand}, an
object entropic transport does not possess: optimal transport sees only a
cost matrix and fixed marginals. The market is a model onto which
task-side structure can be grown; Sinkhorn is a fixed smoother. This,
rather than any single benchmark number, is the strongest long-run argument
for the market formulation.

Raw accuracy and noise-robustness favour
Sinkhorn (Table~\ref{tab:mainresults}); the market's higher starving-task
recovery at zero noise ($90.7\%$ vs.\ $70.6\%$) sits within overlapping
standard deviations and is confounded at higher noise
(Section~\ref{sec:threats}), so we do not claim it as a robust advantage.
The market's territory is guarantees, interpretable sparsity, principled
contested splits, and extensibility -- not point accuracy.

\subsection{Threats to validity}
\label{sec:threats}

The evaluation is synthetic; no claim is made about performance on real
user data, only about the internal correctness and comparative behavior of
the four rules under a controlled, de-circularized generative process. The
generative process, noise model, and parameter choices
(Table~\ref{tab:repro}) were fixed by reasonableness rather than tuned by
search, and results are reported at personal-planner scale (tens of tasks,
hundreds of actions); scaling behavior at organizational scale is not
tested. The starving-task recovery metric becomes less interpretable at
high noise, where it is partly inflated by the same ghost credit that
Figure~\ref{fig:noise}(a) shows growing -- it should be read jointly with
TV error and ghost credit, not in isolation.

\section{Discussion}
\label{sec:discussion}

\subsection{When simpler alternatives suffice}
\label{sec:alternatives}

A method that adds machinery should survive the question ``is there a
cheaper way?'' At personal-planner scale the market's PRD solve costs
microseconds (Section~\ref{sec:computation}) -- computation is not the
expense; implementation complexity, external dependencies, and user trust
are. Table~\ref{tab:mainresults} itself makes the point starkest: Sinkhorn
beats the market on raw share accuracy at two of the three noise levels
tested. Table~\ref{tab:altcompare} situates the market among eight routes
to the same stalled-progress problem: human link suggestions
\citep{amershi2014}, schedule-context heuristics, LLM-judged percentages
\citep{zheng2023judge}, constrained softmax \citep{duchi2008}, Sinkhorn
OT \citep{sinkhorn1967,cuturi2013sinkhorn}, the attribution market, Shapley
values \citep{shapley1953,lundberg2017shap}, and outcome check-ins
\citep{harkin2016}.

\newcommand{\rPP}{\textcolor{builtcol}{$++$}}
\newcommand{\rP}{\textcolor{builtcol}{$+$}}
\newcommand{\rO}{\textcolor{muted}{$\circ$}}
\newcommand{\rN}{\textcolor{crimsonc}{$-$}}
\newcommand{\rNN}{\textcolor{crimsonc}{$--$}}

\begin{table}[h]
\centering\footnotesize
\setlength{\tabcolsep}{4.5pt}
\begin{tabular}{@{}l cccccccc@{}}
\toprule
 & \shortstack{adjacent-\\work fix} & \shortstack{share\\accuracy}
 & \shortstack{guaran-\\tees} & \shortstack{retro-\\activity}
 & \shortstack{explain-\\ability} & \shortstack{cheap\\to build}
 & \shortstack{cheap\\to run} & \shortstack{low user\\burden} \\
\midrule
1. Link suggestions      & \rP  & \rPP & \rP & \rN  & \rPP & \rPP & \rP  & \rN \\
2. Schedule-context rule & \rNN & \rO  & \rP & \rNN & \rPP & \rPP & \rPP & \rPP \\
3. LLM percentages       & \rPP & \rPP & \rO & \rN  & \rP  & \rP  & \rN  & \rPP \\
4. Constrained softmax   & \rPP & \rN  & \rP & \rP  & \rO  & \rPP & \rPP & \rPP \\
5. Sinkhorn OT           & \rPP & \rPP & \rP & \rP  & \rO  & \rP  & \rPP & \rPP \\
6. Attribution market    & \rPP & \rP  & \rPP& \rPP & \rPP & \rN  & \rPP & \rPP \\
7. Shapley values        & \rPP & \rP  & \rP & \rO  & \rP  & \rNN & \rNN & \rPP \\
8. Outcome check-ins$^{*}$ & \rO & \rO  & \rO & \rO  & \rPP & \rPP & \rPP & \rN \\
\bottomrule
\end{tabular}
\caption{Eight routes to the stalled-progress problem. Every column reads
``more $=$ better'' ($++$ best, $--$ worst, $\circ$ neutral or not
applicable); the three cost columns are rated as \emph{cheapness}, so $++$
means inexpensive. \emph{Guarantees} = conservation $+$ budget capping.
Accuracy entries for rows 4--6 are grounded in Table~\ref{tab:mainresults};
the market's guarantees column is a theorem
(Propositions~\ref{prop:conservation}--\ref{prop:junk}), not an
observation. Each row's citation (Amershi et al.\
\citep{amershi2014}, Zheng et al.\ \citep{zheng2023judge}, Duchi et al.\
\citep{duchi2008}, Sinkhorn--Knopp and Cuturi
\citep{sinkhorn1967,cuturi2013sinkhorn}, Shapley and Lundberg--Lee
\citep{shapley1953,lundberg2017shap}, Harkin et al.\ \citep{harkin2016})
supports the general characterization of that technique, not the specific
mark in this table -- the marks are this paper's own comparative judgment.
Row~2 (schedule-context rule) is an internal planner heuristic with no
published analogue, so it carries no citation by construction, not by
omission.
$^{*}$Row 8 addresses the stalled-goal \emph{symptom} directly but does not
attribute hours, hence the neutral marks.}
\label{tab:altcompare}
\end{table}

No row is uniformly best. Rows~1--2 are cheap because a human or the
existing plan does the work; rows~4--5 are lean automatic engines with weak
or soft guarantees, and Sinkhorn wins on raw accuracy. The market earns its
place when the guarantees column matters more than the accuracy column --
when a downstream consumer (a completion percentage, a scheduler) needs a
number it can trust never to exceed a task's plan, not merely one that is
usually close.

\subsection{Markets as a modeling substrate}
\label{sec:substrate}

The quasi-linear Fisher market of this paper is one \emph{instance} of a
broader stance: that the relationship between a plan and the effort that
realizes it is naturally a \emph{market}, with the equilibrium concept --
here, Eisenberg--Gale competitive equilibrium -- a replaceable choice rather
than the essence. What is invariant is a scarce resource (logged or future
effort), budgeted claimants (tasks), valuations (the affinity signal), and
an \emph{endogenous price} that clears supply against demand; our guarantees
follow from that clearing structure, not the utility's linear form. That the
equilibrium concept is a dial, not a commitment, is already visible here:
Section~\ref{sec:noise-fix} places the market and entropy-regularized
optimal transport on one parameter, and the same substrate admits
budget-pacing \citep{conitzer2022pacing}, Nash-bargaining
(Proposition~\ref{prop:fair}), and matching- or exchange-economy readings of
the identical problem. The market view thus sits \emph{above} any single
solution concept.

Accepting the description imports a mature toolbox. A real planner clears
\emph{repeatedly}, so the honest object is a price \emph{process}, developed
as transaction-cost rebalancing \citep{davis1990} and by the
online-Fisher-market literature \citep{gao2021pace,liao2022inference}. Two
further layers attach: \emph{uncertainty quantification} -- propagating the
estimated affinity's covariance (already in the valuation,
Section~\ref{sec:markowitz}) through the equilibrium map to distributions
over allocations, of which our noise analysis
(Section~\ref{sec:noise-explain}) is a first-order case and for which
generalized polynomial chaos \citep{xiu2002gpc} is standard machinery -- and
\emph{forecasting}, where we conjecture the forward market's continuum limit
is a mean-field-game \citep{lasry2007mfg} / stochastic-PDE system, solvable
at scale by neural operators \citep{li2021fno}, deep backward-SDE methods
\citep{weinan2017deepbsde}, or reinforcement learning. None of this dynamic,
uncertainty-quantification, or deep-learning program is developed or
evaluated here; it is the direction the formulation opens, and the
mean-field-game and stochastic-PDE bridges are conjectures, not results.

\subsection{Limitations}
\label{sec:limits}

Several limitations bound these results. The market's noise sensitivity is
real: it is second-best at zero noise and the \emph{least} accurate of the
four rules at both nonzero noise levels (Table~\ref{tab:mainresults}), even
though its budget cap stays exact, and the general entropy-regularized
market (Eq.~\eqref{eq:entropicmarket}) that would resolve this is analyzed
but not tuned into an efficient algorithm here. Convergence is only
conditional -- Proposition~\ref{prop:contraction}'s sufficient condition can
fail, and did in one of ten adversarial instances
(Section~\ref{sec:convresults}), so a deployment should monitor the
outer-loop residual and fall back to the linear market ($\mu\equiv1$) if it
does not shrink. Like every rule in Table~\ref{tab:hierarchy}, the market can
over-credit genuinely close tasks with progress a stricter causal notion
would not -- the same regime that stresses Proposition~\ref{prop:contraction}.
The mean--variance extension is only partial: Section~\ref{sec:markowitz}
solves the per-task subproblem but leaves the joint equilibrium an open
variational-inequality question. Finally, evaluation is synthetic
(Section~\ref{sec:experiments}) and confined to personal-planner scale;
validation at deployment and organizational scale, and of the affinity fit
against real user corrections, is future work.

\section{Conclusion}
\label{sec:conclusion}

We formulated fractional attribution of performed actions to planned tasks
as a quasi-linear Fisher market, in which a seller reserve price and a
buyer cash option turn three informal design requirements into theorems:
conservation, a hard budget cap, and a provable junk filter. Extending the
market with a completion-seeking utility exposed a genuine gap in the
standard convergence theory for its algorithm; we resolved it with a
satiation-threshold fixed-point procedure, proved its existence via
Brouwer's theorem, proved a sufficient condition for its local geometric
convergence, and validated both properties -- including their honest
boundary -- across random and adversarial instances. A de-circularized,
multi-seed empirical evaluation, designed specifically to avoid testing the
method against structure planted by its own similarity assumption, surfaced
a further genuine weak spot: the market's price-competitive, zero-entropy
equilibrium is more sensitive to affinity-observation noise than
entropy-regularized optimal transport. We traced this to a precise,
citable distinction between a permanently regularized fixed point and a
vanishing-regularization solution path, and resolved it with a
one-parameter entropy-regularized generalization together with a
noise-adaptive prescription for its use. The practical implication is a
choice, not a verdict: where raw share accuracy is what matters, Sinkhorn
-- or, at extreme noise, even hard assignment -- is the better tool
(Section~\ref{sec:alternatives} summarizes when simpler alternatives
suffice); the market earns its place
specifically where a downstream consumer needs a number that is
guaranteed, not merely usually accurate, never to exceed what a task
actually planned. Every reported result is
accompanied by its reproducibility parameters and its threats to validity,
and Section~\ref{sec:limits} states plainly where the model's guarantees
are weaker than they first appear. Temporal dynamics and a forward-looking,
reinforcement-learning-amenable extension are summarized here and developed
fully in a companion technical report \citep{doplan2026report}; the natural
next steps are a working implementation of the entropy-regularized market's
efficient algorithm, an empirical study of the joint mean--variance
equilibrium's variational-inequality conditions, and validation of the
complete pipeline against real user correction data. Beyond these concrete
steps, Section~\ref{sec:substrate} argues that the deeper contribution is
the market \emph{description} itself: the Fisher model here is one instance
of a stance under which planning becomes a market, opening a dynamic,
uncertainty-aware, and learning-based program we sketch but do not develop.

\section*{CRediT authorship contribution statement}
\textbf{Salavat Ishbulatov}: Conceptualization, Methodology, Software,
Validation, Formal analysis, Investigation, Writing -- original draft,
Writing -- review \& editing, Visualization.

\section*{Declaration of competing interest}
The author declares no known competing financial interests or personal
relationships that could have appeared to influence the work reported in
this paper.

\section*{Declaration of generative AI in scientific writing}
During the preparation of this work the author used a large-language-model
assistant (Claude, Anthropic) to help draft text, develop and verify
derivations, and produce the analysis scripts and figures. The author
reviewed and edited all content and takes full responsibility for the
content of the published article.

\section*{Funding}
This research did not receive any specific grant from funding agencies in
the public, commercial, or not-for-profit sectors.

\appendix
\section{Notation}
\label{app:notation}

\begin{table}[h]
\centering\small
\begin{tabularx}{\textwidth}{@{}lX@{}}
\toprule
Symbol & Meaning \\
\midrule
$i=1,\dots,m$ & tasks; $i=0$ denotes the float \\
$j=1,\dots,n$ & performed actions \\
$b_i$ & task $i$'s remaining planned budget (hours) \\
$d_j$ & action $j$'s logged duration (hours) \\
$q_{ij}\in[0,1]$ & fused affinity between task $i$ and action $j$, Eq.~\eqref{eq:affinity} \\
$w_{ij}\in[0,1]$ & share of action $j$ credited to task $i$; $\sum_iw_{ij}=1$ \\
$p_j$ & equilibrium price of action $j$ \\
$\rho$ & reserve rate (floor price per hour) \\
$u_0$ & cash utility rate \\
$P_i=\sum_jw_{ij}d_j$ & task $i$'s raw credited progress (hours) \\
$V_i=\sum_jq_{ij}w_{ij}d_j$ & task $i$'s quality-adjusted progress \\
$T_i$ & task $i$'s planned total (completion target) \\
$U_i(\cdot)$, $U_i'(\cdot)$ & completion utility and its derivative, Eq.~\eqref{eq:completionutil} \\
$\mu_i=U_i'(V_i)$ & satiation multiplier, Algorithm~\ref{alg:outerloop} \\
$\Sigma_i$ & task $i$'s affinity-estimation covariance, Section~\ref{sec:markowitz} \\
$\tau$ & entropy-regularization strength, Eq.~\eqref{eq:entropicmarket} \\
$\sigma$ & affinity-observation noise level, Section~\ref{sec:benchdesign} \\
\bottomrule
\end{tabularx}
\caption{Notation used throughout the paper.}
\label{tab:notation}
\end{table}

\section{Worked example}
\label{app:worked}

Take $m=2$ tasks, $n=3$ actions, with
\[
q=\begin{bmatrix}0.9&0.2&0.1\\0.1&0.8&0.6\end{bmatrix},\quad
d=\begin{bmatrix}2&3&1\end{bmatrix}\text{h},\quad
b=\begin{bmatrix}4\\5\end{bmatrix}\text{h},\quad
\rho=0.25,\ u_0=0.30.
\]
Then $v=q\odot d=\begin{bmatrix}1.8&0.6&0.1\\0.2&2.4&0.6\end{bmatrix}$,
and initialization gives
\[
f^{(0)}=\begin{bmatrix}2.0160&0.6720&0.1120\\0.2188&2.6250&0.6563\end{bmatrix},
\quad c^{(0)}=\begin{bmatrix}1.2000\\1.5000\end{bmatrix}.
\]
One round of Algorithm~\ref{alg:prd} gives prices
$p^{(1)}=[2.7348,\,4.0470,\,1.0183]$ and
\[
w^{(1)}=\begin{bmatrix}0.7372&0.1660&0.1100\\0.0800&0.6486&0.6445\end{bmatrix},
\quad
f^{(1)}=\begin{bmatrix}2.9527&0.2217&0.0245\\0.0332&3.2305&0.8025\end{bmatrix},
\quad c^{(1)}=\begin{bmatrix}0.8011\\0.9338\end{bmatrix}.
\]
A second round gives $p^{(2)}=[3.4859,\,4.2022,\,1.0769]$,
\[
f^{(2)}=\begin{bmatrix}3.3902&0.0704&0.0051\\0.0037&3.5837&0.8684\end{bmatrix},
\quad c^{(2)}=[0.5344,\,0.5442].
\]
Task~1 is pulling share away from actions~2 and~3 toward its clear best
match (action~1, $q_{11}=0.9$) exactly as the junk filter
(Proposition~\ref{prop:junk}) predicts, while its cash holding
shrinks each round as it finds genuine matches worth more than $u_0$.
These numbers are exact to the precision shown and can be used to
unit-test an independent implementation of Algorithm~\ref{alg:prd}
without needing the full benchmark.

\bibliographystyle{elsarticle-num}
\bibliography{references}

\end{document}